\documentclass{article}
\usepackage{algorithmic}
\usepackage{algorithm}
 \usepackage[preprint]{neurips_2026}

\usepackage{amsmath,amsfonts,bm}

\def\eqref#1{equation~\ref{#1}}

\def\1{\bm{1}}

\DeclareMathAlphabet{\mathsfit}{\encodingdefault}{\sfdefault}{m}{sl}
\SetMathAlphabet{\mathsfit}{bold}{\encodingdefault}{\sfdefault}{bx}{n}

\usepackage[utf8]{inputenc} 
\usepackage[T1]{fontenc}    
\usepackage{hyperref}       
\usepackage{url}            
\usepackage{booktabs}       
\usepackage{amsfonts}       
\usepackage{nicefrac}       
\usepackage{microtype}      
\usepackage{xcolor}         
\usepackage[most]{tcolorbox}
\tcbuselibrary{theorems, skins}
\usepackage{amsmath}
\usepackage{amssymb}
\usepackage{mathtools}
\usepackage{amsthm}
\usepackage{graphicx}
\usepackage{amsthm}
\usepackage{multirow}
\usepackage{graphicx}
\usepackage{array}
\usepackage{booktabs}
\usepackage{caption}
\usepackage{makecell} 
\usepackage[table]{xcolor}
\usepackage{stackengine}
\usepackage{enumitem}
\usepackage{float}
\usepackage{wrapfig}
\newcommand\blfootnote[1]{%
  \begingroup
  \renewcommand\thefootnote{}\footnote{#1}%
  \addtocounter{footnote}{-1}%
  \endgroup
}
\usepackage{titlesec}
\titlespacing*{\section}
{0pt}    
{1.0ex}  
{0.5ex}  

\titlespacing*{\subsection}
{0pt}
{0.8ex}
{0.4ex}

\titlespacing*{\paragraph}
{0pt}
{0.5ex}
{1em}

\definecolor{linecolor2}{RGB}{230, 234, 217}
\newtheorem{theorem}{Theorem}[section] 

\newcommand{\ie}{\textit{i.e.}}

\newcommand{\state}{\mathbf{s}}

\newcommand{\observation}{\mathbf{o}}

\newcommand{\actchunk}{\mathbf{A}}

\newcommand{\instruction}{\ell}

\usepackage[most]{tcolorbox}

\definecolor{headerblue}{RGB}{26, 54, 93}
\definecolor{insightyellow}{RGB}{255, 250, 240}
\definecolor{boxborder}{RGB}{254, 235, 200}
\definecolor{proofgreen}{RGB}{240, 255, 244}
\definecolor{proofborder}{RGB}{198, 246, 213}

\newtcolorbox{proofbox}[1]{
    colback=proofgreen, 
    colframe=proofborder, 
    title=\textbf{#1},
    coltitle=black,
    sharp corners
}

\newtcolorbox{insightbox}[1]{
    colback=insightyellow, 
    colframe=boxborder, 
    title=\textbf{#1},
    coltitle=black,
    sharp corners
}

\title{Self-Improving VLA Policies: Selected Diffusion Noise for Spurious-Robust Action Smoothing}

\author{%
  \begin{minipage}{\textwidth}
    \centering
     Duc Minh Nguyen$^{1,2}$, Bao-Ngoc Dao$^{1,2}$, Tung M. Luu$^{3}$, Binh Gia Nguyen$^{1,2}$, Vinh Tong$^{4,5}$\\
     \vspace{0.03in}
    Anji Liu$^{6}$, Vu N. Duong$^{1}$, Dung D. Le$^{1}$, Daniel Sonntag$^{7,8}$, Trung Le$^{9}$, Ngan Le$^{10}$ \\
     \vspace{0.03in}Jan Peters$^{9}$,  
    An Thai Le$^{1,2}$, Minh Nhat Vu$^{1,2}$, Mathias Niepert$^{4,5}$, Khoa D. Doan$^{1}$\\
    \vspace{0.03in}
    Duy M. H. Nguyen$^{\dagger\,4,5,7}$, Vien Anh Ngo$^{\dagger\,1,2}$
    \\[0.4cm]
    $^{1}$\textmd{Center for AI Research, VinUniversity} \quad 
    $^{2}$\textmd{VinRobotics} \quad
    $^{3}$\textmd{KAIST}\quad $^{4}$\textmd{University of Stuttgart}\quad 
    $^{5}$\textmd{IMPRS-IS}\quad
    $^{6}$\textmd{National University of Singapore} \quad $^{7}$\textmd{DFKI} \quad $^{8}$\textmd{University of Oldenburg},\\
    $^{9}$\textmd{Monash University} \quad $^{10}$\textmd{University of Arkansas} \quad $^{9}$\textmd{TU Darmstadt}
  \end{minipage}
}

\begin{document}

\maketitle
\blfootnote{$\dagger$: Project leads.}

\vspace{-0.2in}
\begin{abstract}
Diffusion-based Vision-Language-Action (VLA) policies enable strong generalization in robotic manipulation, but remain sensitive to spurious visual correlations and noisy action generation, leading to brittle behavior under perturbations. We introduce \textsc{Selected Diffusion Noise (SDN)}, a simple, training-free test-time method that improves both robustness and success rate by leveraging the diffusion noise space as a controllable degree of freedom. SDN dynamically samples noise vectors that are maximally separated from a reference set to mitigate reliance on spurious cues, while selecting candidates that yield more coherent action trajectories. This dual objective encourages stable behavior even under object-masked observations and reduces action jitter without modifying model parameters. We evaluate SDN on two simulation benchmarks (Google Robot, Widow-X) and two real-world robotic datasets across multiple VLA policies, including $\pi_0$, Groot-N1.5, and Groot-N1.6. SDN consistently improves success rates by +8\% in simulation and +10\% in real-world settings, while producing smoother and more stable actions. Our results highlight that diffusion noise selection can play as an effective and general mechanism for enhancing VLA policies at test time.
\end{abstract}

\vspace{0.1in}
\section{Introduction}
The emergence of Generalist Robot Policies, such as $\pi_0$ \cite{black2024pi_0} and GR00T \cite{bjorck2025gr00t}, has demonstrated that scaling diffusion-based Vision-Language-Action (VLA)~\cite{zitkovich2023rt,kim2024openvla} models on massive datasets can yield unprecedented multi-task versatility. However, these foundation policies often remain fragile in out-of-distribution settings, as they tend to overfit to spurious correlations and non-essential behaviors in human demonstrations, failing to distinguish between critical task logic and incidental expert variance~\cite{de2019causal,li2024evaluating,li2024evaluating}.
Furthermore, in real-world environments, a pre-trained model cannot be easily retrained to handle local distribution shifts, such as novel lighting conditions or specific tool wear, due to the prohibitive computational cost of fine-tuning multi-billion-parameter architectures~\cite{kim2025fine,hanyu2025slotvla,nguyen2026foca} and the risk of catastrophic forgetting~\cite{lesort2020continual,yang2024preventing,chung2026rethinking}. These issues highlight a critical need for test-time self-improvement~\cite{sun2020test,hansen2020self}: a mechanism that enables VLA models to autonomously refine their action distribution during inference.

Toward that goal, current research moves beyond treating the diffusion process inside VLA as a simple black-box generator, instead viewing it as a dynamically steerable sampling space and focusing on three main directions.  (i) \textit{Noise-steering methods} inject guidance signals, e.g., classifier-free~\cite{ho2022classifier} or classifier-based gradients~\cite{wang2026ppguide,liu2026vls}, action-coherence vectors~\cite{park2025acg}, into each denoising step; (ii) \textit{best-of-N sampling} draws multiple rollouts and selects the most plausible~\cite{wu2025policy,chi2025diffusion}; and (iii) \textit{non-Gaussian priors}~\cite{li2026global,shariatian2024heavy} bias the latent space with environment-specific memories. While these methods achieve promising performance, all three usually rely on external evaluators, such as vision-language models or auxiliary memories, to decide which trajectory to keep, yet they still overlook two practical pain points. First, we observe that current VLA models often suffer from a reliance on raw visual input that leads to ``hallucinations'' and spurious correlations. For example, in a ``place the carrot onto the plate'' task, the robot may incorrectly assume that the carrot has already been grasped and continue executing the placement behavior even after a failed grasp attempt, causing the arm to move toward the plate while holding nothing, as illustrated in Figure~\ref{fig:visualize}. Second, these policies frequently prioritize semantic likelihood over physical feasibility, meaning they might select a trajectory that technically reaches the goal but involves sudden, high-jerk rotations or ``jerky'' motions~\cite{black2025real,hao2025disco}. These motions could cause mechanical wear and tear and instability. 

We bridge these gaps with the Selected Diffusion Noise framework, a simple, training-free one that enhances diffusion-based VLA policies by treating the initial diffusion noise as a \textit{controllable degree of freedom}. Unlike standard ``\textit{best-of-N}'' sampling, which typically picks trajectories based on raw probability or scores from an external VLM~\cite{wang2026ppguide}, SDN employs a \textbf{\textit{dual-stage filter}} that evaluates each candidate’s \textit{visual grounding} and \textit{kinematic instability}~\cite{lee2024performance}. For each inference step, we generate candidate actions from the original scene, along with a parallel ``adversarial'' set generated from a version in which the target object is masked. These objects can be detected at the first frame using off-the-shelf DiNO~\cite{liu2024grounding,ren2024dino} grounded from task command and followed by a SAM-2 model~\cite{ravi2024sam} to generate tracking at subsequent frames. 

We then perform a noise-space optimization, \ie, identifying and filtering out the hallucinating noise seeds, those that, as observed in our experiments (Tab.~\ref{tab:base_masking_image}), continue to generate high-confidence but incorrect actions even when the object is hidden, indicating a dangerous reliance on spurious background cues. 
This strategy may also help address partially occluded scenes, where policies can rely on ``vision shortcuts,'' such as background distractors or training-set biases. By checking whether action candidates remain sensitive to the target object's visibility, our method aims to filter out candidates that are less grounded in the object's actual presence.

To ensure the final action is physically viable, we further refine the selection by minimizing a smoothness objective; this is critical as it eliminates high-jerk, oscillatory motions that cause mechanical wear-and-tear and lead to brittle behavior under physical perturbations. While related works on smoothness-aware action chunking~\cite{ahn2026noise,park2025acg} typically rely on heuristic manipulations of model parameters to indirectly induce smoother outputs, the SDN focuses on the real physical smoothness of each potential trajectory among multiple rollout candidates. In short, we transform a \textit{vanilla sampling routine into a robust, smooth selection process}, offering a plug-and-play enhancement for architectures such as $\pi_0$ and GR00T without requiring auxiliary networks or parameter updates.
We summarize our contributions as follows:

\begin{itemize}
    \item \textit{Self-Diagnostic Latent Steering}: We introduce a training-free mechanism that identifies and filters "hallucinating" noise seeds by contrasting policy outputs from original and object-masked views, steering the frozen model away from spurious visual correlations.
    \item \textit{Dual-Objective Optimization}: We propose a novel noise-selection strategy that unifies visual grounding with kinematic stability, simultaneously maximizing separation from unreliable action sets and minimizing a smoothness metric to eliminate high-jerk, oscillatory motions.
    \item \textit{Plug-and-Play Enhancement}: We provide extensive validation across diverse diffusion-based VLA backbones ($\pi_0$, GR00T-N1.5/1.6) and benchmarks such as Google Robot, Widow-X X~\cite{li2024evaluating}, as well as two real-world robot tasks with Aloha robot, demonstrating consistent success-rate gains ($8-10\%$) and improved physical reliability at test-time without requiring retraining or external evaluators.
\end{itemize}

\begin{figure}[!t]
    \centering
   \includegraphics[width=1.\linewidth]{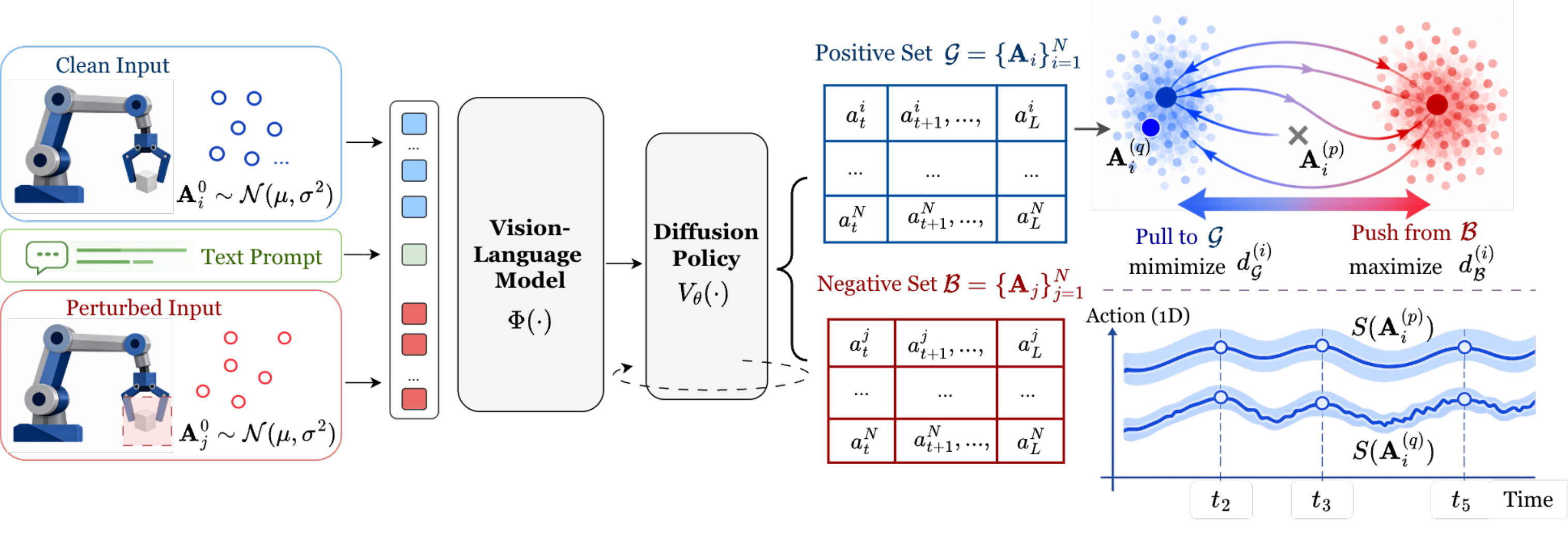}
    \vspace{-0.15in}
    \caption{The SDN refines the VLA action distribution through a hierarchical selection process. \textit{Stage 1 (Contrastive Grounding)}: We compute a grounding score $R_{\text{ground}}^{(i)} = d_{\mathcal{B}}^{(i)} - d_{\mathcal{G}}^{(i)}$ using non-parametric $k$-NN density estimation, selecting action chunks that maximize distance from ``hallucinated'' behaviors (push from $\mathcal{B}$) while maintaining task consensus (pull to $\mathcal{G}$). \textit{Stage 2 (Smoothness Filtering)}: From the top-$M$ grounded candidates, we perform a final kinematic optimization to select the action chunk $S(\mathbf{A}_i^{(q)})$ that minimizes jerk instability to select the most coherent action.}
    \vspace{-0.2in}
    \label{fig:method_overview}
\end{figure}
\section{Related Work}
\paragraph{Learned Latent Steering.}
Recent advancements in diffusion-based robotic control have shifted focus toward steering latent spaces to adapt pre-trained policies to novel tasks and environmental constraints. One prominent direction involves (i) \textit{learned adaptation}, where methods utilize Reinforcement Learning to discover optimal latent shifts for out-of-distribution scenarios~\cite{wagenmaker2025steering,gao2026steervla} or employ unified guidance frameworks to align vision and action spaces through fine-tuning~\cite{zhang2025align,liu2026last,im2026latent}. Beyond direct policy updates, recent work in Vision-Language Models (VLMs) has explored using large-scale models as \textit{ii) high-level planners} or reward providers to guide low-level diffusion execution~\cite{zheng2022vlmbench,liu2024ok,lee2026roboreward}. Other strategies explore \textit{iii) memory-augmented systems} that avoid weight updates altogether by leveraging global priors and local consistency via retrieval~\cite{li2026global}. 

While these approaches offer significant flexibility, they typically remain bounded to external dependencies, such as environmental reward signals, additional training phases, retrieval databases, or secondary VLMs used for evaluation~\cite{wagenmaker2025steering,zhang2025align,li2026global,lee2026roboreward}. These requirements often limit the immediate, zero-shot deployment of frozen foundation models in the wild, necessitating self-contained test-time improvements. \textit{In contrast}, SDN is entirely \textit{self-contained} and \textit{training-free}, enabling immediate test-time improvement without secondary evaluators.

\paragraph{Inference-Time Guidance and Policy Selection.}
A robust alternative to retraining VLAs involves steering or selecting their outputs during inference \textit{without training}. Traditional approaches, such as Classifier-Free Guidance (CFG)~\cite{ho2022classifier}, derive an unconditional model to improve adherence to language inputs. More recent "best-of-N" methods draw multiple samples and use external verifiers or simple Ensemble Learning~\cite{chi2025diffusion} techniques, like temporal averaging, to stabilize action execution. To address core visual grounding failures, Policy Contrastive Decoding (PCD)~\cite{wu2025policy} dynamically steers probability distributions by contrasting original and object-masked views; however, this often requires \textit{expensive per-step image inpainting}. Concurrently, methods like White Noise Guidance (WNG)~\cite{bansal2023universal} and Action Coherence Guidance (ACG)~\cite{park2025acg} disrupt internal temporal consistency to steer models away from jerky trajectories. Yet, these methods rely on subjective augmentations, such as the attention matrix~\cite{park2025acg}, rather than on direct detection of physical smoothness. VLA-Pilot~\cite{li2026towards} in the other direction attempts to refine actions through evolutionary optimization, but its reliance on an external reasoning module for reward generation introduces significant computational overhead and complexity.

SDN distinctly bridges these gaps by leveraging the noise space as a diagnostic tool to simultaneously filter hallucinating noise seeds via efficiency-observation augmentations such as object masking and to prioritize trajectories based on a real smoothness metric, all without structural modifications or real-time inpainting.

\paragraph{Robustness and Smoothness in Robotic Diffusion.}
The transition from generative modeling to physical execution is frequently hindered by visual hallucinations and kinematic instability. To improve robustness, Disentangled Diffusion Policy (DisDP)~\cite{vanjani2025disdp} separates sensor modalities into shared and private embeddings to maintain task-relevant features while suppressing environmental noise and sensor perturbations. Regarding motion quality, research into Colored Gaussian Noise~\cite{guo2024smooth} for diffusion encourages trajectories that respect temporal coherence and dynamic constraints by modifying the noise structure itself.

Additionally, the Reactive Diffusion Policy (RDP)~\cite{xue2025reactive} employs a "slow-fast" hierarchy, utilizing a latent diffusion policy for high-level planning and an asymmetric tokenizer for high-frequency tactile feedback control. While effective, these techniques for enhancing smoothness are typically integrated during the training phase~\cite{vanjani2025disdp,xue2025reactive} or require specialized, high-frequency hardware such as tactile or force sensors~\cite{xue2025reactive}. SDN offers an alternative approach by treating smoothness as a selection criterion at test time, effectively filtering for kinematically fluid trajectories that already exist within the model’s learned distribution, without necessitating additional hardware or retraining.

\section{Preliminaries}
We model language-conditioned imitation learning as finding a VLA policy $\pi_\theta$ that maximizes the log-likelihood of expert action chunks $\mathbf{A}_t \in \mathbb{R}^{L \times D}$ from a dataset $\mathcal{D}$, where $L$ is the chunk length and $D$ the action dimension. At each timestep $t$, the agent receives an observation $\observation_t$ consisting of multiple RGB images, the robot’s joint state, and a language instruction $\ell$. The VLA architecture comprises a VLM encoder $\Phi(\cdot)$ that extracts multi-modal context $\mathbf{e} = \Phi(\mathbf{o}_t, \ell)$ and a diffusion transformer $V_\theta(\cdot)$ that predicts a time-dependent vector field.

Using Flow Matching (FM)~\cite{lipman2024flow, lipman2023flow}, the model learns to transport Gaussian noise $\mathbf{A}^0 \sim \mathcal{N}(\mathbf{0}, \mathbf{I})$ to an expert action $\mathbf{A}^1$ via the probability path $\mathbf{A}^\tau = (1-\tau)\mathbf{A}^0 + \tau\mathbf{A}^1$, where $\tau \in [0, 1]$ is the flow matching time step. The policy is trained by regressing $V_\theta(\cdot)$ to the target velocity $\mathbf{A}^1 - \mathbf{A}^0$:
\begin{equation}
\mathcal{L}_{\text{FM}}(\theta) = \mathbb{E}_{\tau, \mathbf{A}^0, \mathbf{A}^1, \mathbf{e}} \left[ \| V_\theta(\mathbf{A}^\tau, \tau, \mathbf{e}) - (\mathbf{A}^1 - \mathbf{A}^0) \|^2 \right].
\end{equation}
We define $\{\tau_k\}_{k=0}^K$ as a discretization of the time interval $[0,1]$, with $\Delta\tau = \tau_{k+1}-\tau_k$ \cite{tonglearning}. At inference, actions are generated by solving the ODE $\frac{d\mathbf{A}^\tau}{d\tau} = V_\theta(\mathbf{A}^\tau, \tau, \mathbf{e})$ via the Euler update
$\mathbf{A}^{\tau_{k+1}} = \mathbf{A}^{\tau_k} + \Delta\tau\, V_\theta(\mathbf{A}^{\tau_k}, \tau_k, \mathbf{e})$. This defines a deterministic mapping $f_\theta:\mathbf{A}^0 \mapsto \mathbf{A}^1$, enabling us to treat $\mathbf{A}^0$ as a controllable degree of freedom.

\paragraph{Classifier-Free Guidance (CFG)~\cite{ho2022classifier}.}
To improve adherence to the instructions, guidance can be applied during ODE integration by modifying the velocity vector field. Let
$
V_\theta^{\text{cond}}(\mathbf{A}^\tau, \tau, \mathbf{e})
\quad \text{and} \quad
V_\theta^{\text{uncond}}(\mathbf{A}^\tau, \tau, \mathbf{e}^{\text{uncond}})
$
denote the conditional and unconditional vector fields, where the latter is often obtained by masking the instruction (\ie, $\ell = \emptyset$) \cite{reuss2023goal}. The guided field is defined as
\begin{equation}
V_\theta^{\text{guided}}(\mathbf{A}^\tau, \tau, \mathbf{e})
=
V_\theta^{\text{cond}}(\mathbf{A}^\tau, \tau, \mathbf{e})
+ w \Big(
V_\theta^{\text{cond}}(\mathbf{A}^\tau, \tau, \mathbf{e})
- V_\theta^{\text{uncond}}(\mathbf{A}^\tau, \tau, \mathbf{e}^{\text{uncond}})
\Big),
\end{equation}
where $w \ge 0$ is the guidance scale. This steers the sampling process toward regions of the action space that are more consistent with the instruction.

\section{Methodology}

We depict in Figure~\ref{fig:method_overview} the overall design of SDN. Below, we first discuss motivation before presenting our hierarchical selection algorithm.

\subsection{Motivation: The Hallucination-Instability Mixture}

Despite their flexibility, VLA policies frequently exhibit two main failure modes. At the semantic level, they may produce object hallucinations. For example, in our experiments we observe ``ghost-grasping'', where the policy attends to irrelevant background objects such as a coffee machine and continues to output confident grasping actions even after the target mug has been removed. At the execution level, the policy may prematurely prioritize visual goal matching over physically consistent motion, resulting in unstable or jerky trajectories that degrade robot reliability and may even cause physical damage.

To intuitively understand these failures, we model the policy output distribution as a mixture of the expert data distribution $p^*(\cdot)$ and a failure distribution $p_{\mathrm{fail}}(\cdot)$:
\begin{equation}
\pi_\theta(\actchunk \mid \observation, \instruction)
=
\big(1 - \epsilon(\observation, \instruction)\big)\, p^*(\actchunk \mid \observation, \instruction)
+
\epsilon(\observation, \instruction)\, p_{\mathrm{fail}}(\actchunk \mid \observation, \instruction).
\end{equation}
where $\epsilon(\observation, \instruction) \in [0,1]$ is a context-dependent mixture weight that can be interpreted as the probability of generating a failure mode under a given observation and instruction. 

Here, $p_{\mathrm{fail}}(\cdot)$ does not correspond to an explicitly defined data distribution, but rather captures the portion of the learned policy’s probability mass assigned to incorrect actions due to approximation error and spurious correlations learned during training. Since the policy is trained on expert demonstrations, this failure component is typically concentrated near the data distribution induced by $p^*(\cdot)$, leading to overlap in regions of high probability mass. As a result, failure cases are difficult to distinguish using standard sampling methods, and tend to appear as small but systematic deviations, such as ``ghost-grasping'', rather than random noise. This ambiguity in mode selection makes the policy sensitive to sampling variability, as different samples may switch between expert-like and failure modes under the same conditioning. This overlap in probability mass motivates the need for a more robust global selection mechanism.

\begin{wrapfigure}{r}{0.55\textwidth}
    \centering
    \resizebox{1.0\linewidth}{!}{
    \includegraphics[width=1.5\linewidth]{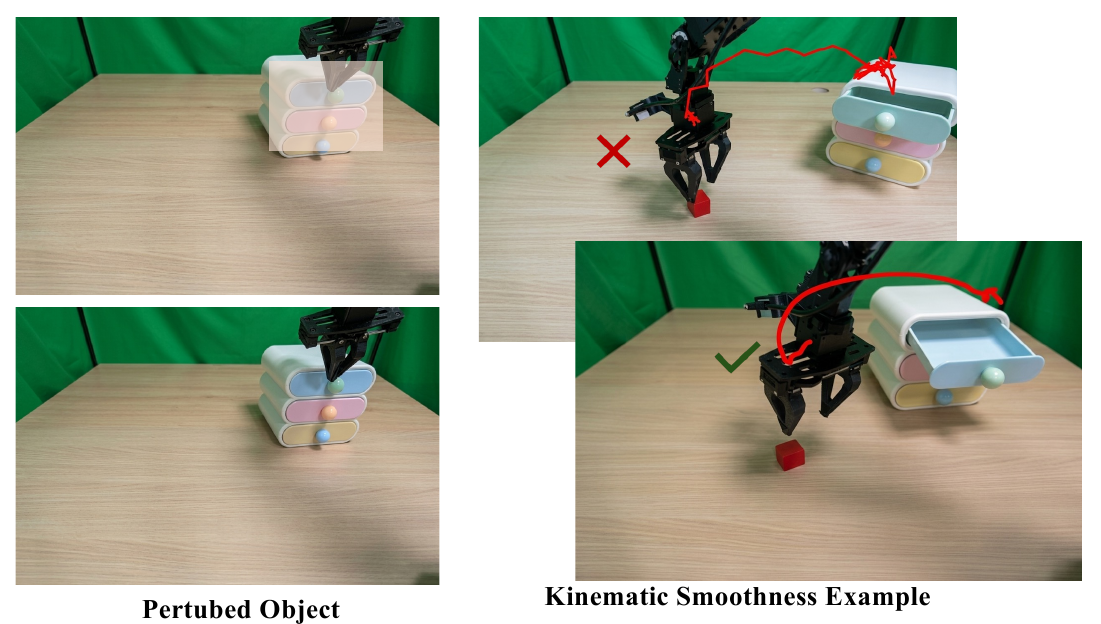}}
    \caption{We visualize our zero-masking strategy and compare our trajectory with a baseline method. We render zero-masked regions as transparent for better visualization}
    \vspace{-0.1in}
    \label{fig:visual_masking}
\end{wrapfigure}
\paragraph{From Local Guidance to Global Selection.}
A common strategy to mitigate failure modes in generative models is guidance, which biases sampling toward more desirable outputs and often improves sample quality \cite{ho2022classifier,sanchez2023stay}. Classifier-Free Guidance (CFG) achieves this by amplifying the difference between conditional and unconditional vector fields. However, in robotic settings, the unconditional model is not designed to represent structured failure modes of the policy, and therefore does not explicitly capture errors such as hallucinated grasps or unstable motions.

A more direct alternative would be to introduce a failure-aware vector field $V_\theta^{\mathrm{fail}}$ and guide sampling away from it \cite{sanchez2023stay,liu2022compositional,azarian2024segmentation}. A natural extension of CFG would take the form
\begin{equation}
V_\theta^{\mathrm{guided}} = V_\theta^{\mathrm{cond}} + w \big(V_\theta^{\mathrm{cond}} - V_\theta^{\mathrm{fail}}\big).
\end{equation}
In practice, failure-aware guidance is challenging because failure trajectories are rarely labeled, learning an additional failure model introduces substantial overhead \cite{jang2026verifier}, and trajectory quality remains sensitive to guidance scale during sampling \cite{so2025improving,sun2025latent}. In contrast, our global selection perspective samples multiple trajectories from different initial noises via flow matching and selects at the trajectory level, instead of relying on stepwise guidance of a single sample.

\subsection{The Selected Diffusion Noise Framework}

To overcome the limitations of local guidance, we perform inference by generating multiple candidate action chunks from different noise seeds and selecting the most reliable one. This turns inference into a global search problem over trajectories rather than step-by-step steering. We use a training-free two-stage selection strategy that first filters task-relevant candidates and then refines them based on physical stability.

\textbf{4.2.1\quad Stage 1: Grounding Filter}

The first stage aims to identify ``grounded'' initial noise seeds via a contrastive evaluation. Intuitively, we prefer samples that are more consistent with task-aligned behavior while being far from typical failure patterns.

\paragraph{Constructing selection sets.}
At each inference step, we approximate this idea using two sets of action chunk candidates generated from the same policy under different conditions:
\begin{itemize}[leftmargin=1.5em]
    \item \textbf{Positive set ($\mathcal{G}$):} Candidates generated from the original observation and instruction,
    $\mathcal{G} = \{\actchunk_{i}\}_{i=1}^N \sim \pi_\theta(\cdot \mid \observation, \instruction)$.
    
    \item \textbf{Negative set ($\mathcal{B}$):} Candidates generated under a ``confused'' version of the input,
    $\mathcal{B} = \{\actchunk_{j}\}_{j=1}^N \sim \pi_\theta(\cdot \mid \observation_{\text{neg}}, \instruction)$.
    Here, $\observation_{\text{neg}}$ is created by zero-masking pixels in the target object region, which is much cheaper than inpainting-based perturbations used in prior work such as PCD~\cite{wu2025policy}. Figure~\ref{fig:visual_masking} shows an example of our masking strategy. 
\end{itemize}

\paragraph{Contrastive Grounding Score.} A grounded action chunk should remain consistent under the original observation while becoming unlikely under corrupted observations where the target object is removed.
We formulate grounding as a contrastive density-ratio problem between a task-aligned distribution $p_{\mathcal{G}}$ and a failure distribution $p_{\mathcal{B}}$. Given a candidate action chunk $\actchunk_i$, we define its score as the log density ratio $R(\actchunk_i) = \log p_{\mathcal{G}}(\actchunk_i) - \log p_{\mathcal{B}}(\actchunk_i).
$

Since both densities are unknown, we estimate them from finite samples using $k$-nearest neighbor (kNN) density estimation~\cite{loftsgaarden1965nonparametric,perez2008kullback}. Concretely, let $\Omega_i^{\mathcal{G}}$ and $\Omega_i^{\mathcal{B}}$ denote the indices of the $k$ nearest neighbors of $\actchunk_i$ in $\mathcal{G} \setminus \{\actchunk_i\}$ and $\mathcal{B}$, respectively. We define the corresponding local scale estimates:
\begin{equation}
d_{\mathcal{G}}^{(i)} = \frac{1}{k} \sum_{j \in \Omega_i^{\mathcal{G}}} \|\actchunk_i - \actchunk_j\|_2,
\qquad
d_{\mathcal{B}}^{(i)} = \frac{1}{k} \sum_{j \in \Omega_i^{\mathcal{B}}} \|\actchunk_i - \actchunk_j\|_2.
\end{equation}

Under standard kNN density estimation, local density is inversely proportional to the neighborhood volume, which is approximated (up to constants) by these average distances~\cite{loftsgaarden1965nonparametric,perez2008kullback}. This yields:
\begin{equation}
\log p_{\mathcal{G}}(\actchunk_i) \approx - \log d_{\mathcal{G}}^{(i)} + C,
\qquad
\log p_{\mathcal{B}}(\actchunk_i) \approx - \log d_{\mathcal{B}}^{(i)} + C.
\end{equation}

Substituting into the density-ratio objective gives
$R(\actchunk_i) \propto \log d_{\mathcal{B}}^{(i)} - \log d_{\mathcal{G}}^{(i)}.$ For efficiency and robustness, we use a monotonic surrogate that preserves the induced ranking:
\begin{equation}
R_{\text{ground}}^{(i)} = d_{\mathcal{B}}^{(i)} - d_{\mathcal{G}}^{(i)}.
\end{equation}

We rank candidates by $R_{\text{ground}}$ and retain the top-$M$ for refinement.

\vspace{0.05in}
\textbf{4.2.2\quad Stage 2: Kinematic Stability Refinement}

\paragraph{Kinematic Stability.}

To ensure physical feasibility, we further evaluate the remaining $M$ candidates for kinematic smoothness. High-jerk or oscillatory motions correspond to abrupt changes in acceleration across consecutive steps, which often lead to unstable or unsafe robot behaviors during long-horizon execution. To better assess temporal consistency, we generate action chunks that are longer than the executed horizon. This extended action chunks exposes delayed oscillations and accumulated instability that may not be observable within short action windows. We quantify smoothness using the JerkRMS metric~\cite{Flash1688}. For a candidate action chunk $\actchunk_i = [a_1^{(i)}, a_2^{(i)}, \dots, a_L^{(i)}]$, we compute:
\begin{equation}
S(\actchunk_i)
=
\sqrt{
\frac{1}{L-3}
\sum_{t=1}^{L-3}
\left\|
\Delta a_t^{(i)}
\right\|_2^2
},
\quad
\text{where }
\Delta a_t^{(i)} =
a_{t+3}^{(i)} - 3a_{t+2}^{(i)} + 3a_{t+1}^{(i)} - a_t^{(i)}.
\end{equation}

This third-order finite difference captures discrete jerk, and $S(\actchunk_i)$ corresponds to its RMS magnitude over time. Minimizing $S(\actchunk_i)$ suppresses candidates that are dynamically unstable, even if they achieve the correct task outcome. We therefore select the final action as:
\begin{equation}
\actchunk^* = \arg\min_{\actchunk_i \in \mathcal{G}_M} S(\actchunk_i).
\end{equation}

The pseudocode is given in Algorithm~\ref{alg:sdn}, and a theoretical analysis is provided in Section~\ref{subsec:theo}.

\begin{figure}[!t]
    \centering
    \begin{minipage}[t]{0.68\linewidth}
        \centering
        \resizebox{1.0\linewidth}{!}{
        \vspace{-0.2in}
        \includegraphics[width=\linewidth]{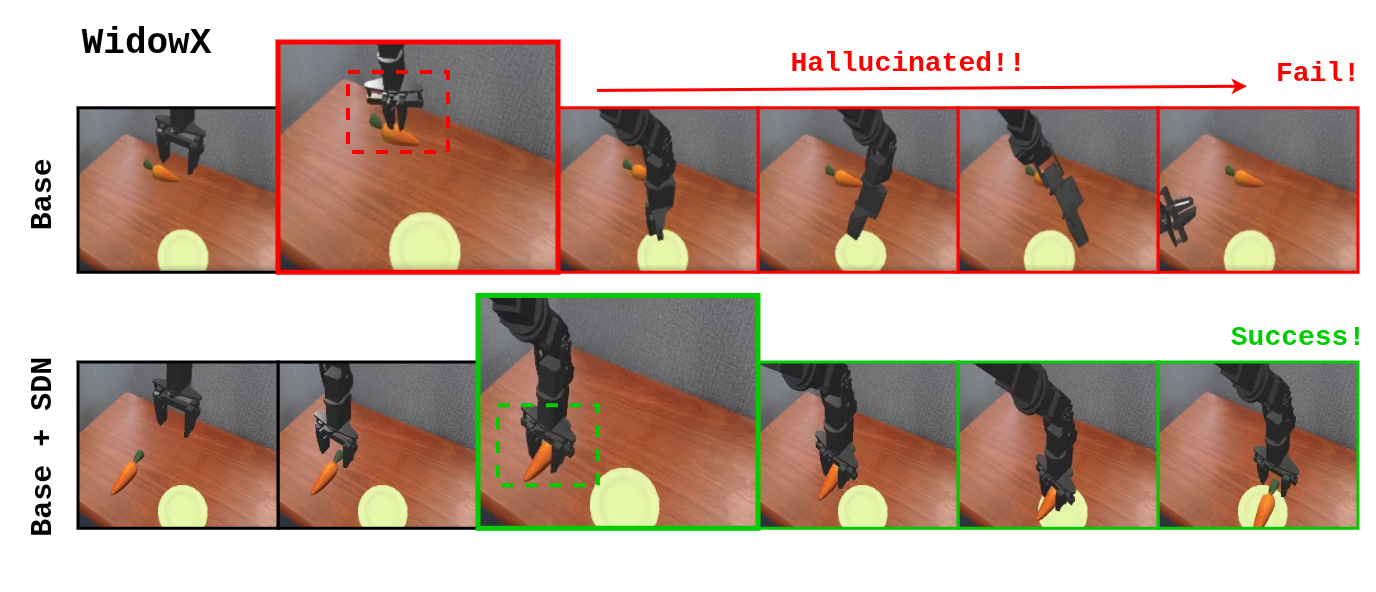}}
        \label{fig:fig1}
    \end{minipage}
    \begin{minipage}[t]{0.3\linewidth}
        \centering
        \includegraphics[width=1.0\linewidth]{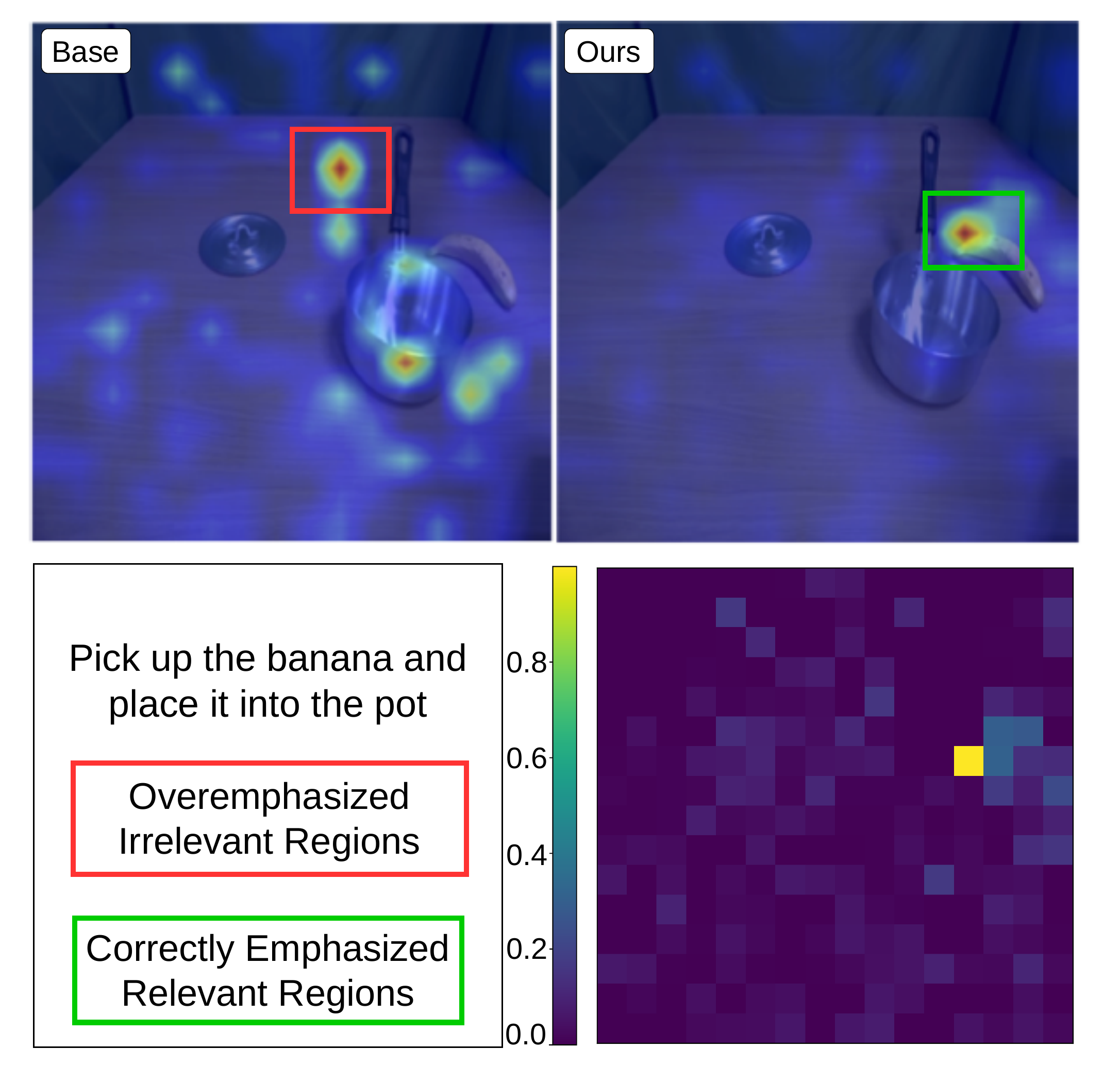}
        \label{fig:fig2}
    \end{minipage}
    \vspace{-0.1in}
    \caption{Visualizing Hallucination Mitigation and Attention Grounding. (\textbf{Left}): Typical hallucination-driven failure case in the Base VLA (top), where the robot continues a task even though it did not grasp up successfully. In contrast, SDN can alleviate those situations. (\textbf{Right}): Spatial Attention Mechanism. Top-left: Base VLA attention heatmap showing significant signal distribution on non-relevant table background regions. Top-right: SDN attention heatmap demonstrating refined focus on the target object. Bottom: Attention matrix pattern with a color bar (0.0 to 0.8), illustrating the structural shift toward sparse, task-relevant features.}
    \vspace{-0.15in}
    \label{fig:visualize}
\end{figure}

\vspace{-0.05in}

\section{Experiments}
\subsection{Experimental Setup}
\textbf{Benchmarks}. We evaluate our approach in both simulated and real-world environments. For simulation, we adopt SIMPLER~\citep{li2024evaluating}, a benchmark shown to strongly correlate with real-world performance, enabling a reliable assessment of policy generalization. We conduct experiments on five tasks with the (i) \textit{Google Robot} and four tasks with the (ii) \textit{WidowX arm}, evaluating each task over 50 rollouts. For real-world evaluation, we deploy our method on the ALOHA robot and consider two manipulation tasks: (iii) \textit{placing a cube into a drawer} and (iv) \textit{placing a banana into a pot}.

\textbf{Baselines}. We first compare SDN against the vanilla VLA models without any additional modules. We further include several training-free, guidance-based baselines that operate by steering the action vector field away from a negative or perturbed field, effectively contrasting it with the desired action:

\begin{itemize}[leftmargin=1.5em]
\vspace{-0.05in}
    \item \textit{Classifier-Free Guidance (CFG)~\citep{ho2022classifier}}: We replace the language instruction with a null input to obtain an unconditional field, and then guide the conditional field away from it to enforce stronger adherence to the language instruction.
    \item \textit{White Noise Guidance (WNG)}~\cite{bansal2023universal}: We induce an incoherent vector field by injecting white noise ($\sigma = 1$) into the action features prior to the self-attention layers, thereby disrupting temporal coherence across timesteps.
    \item \textit{Action Coherent Guidance (ACG)~\citep{park2025acg}}: ACG introduces an alternative for disrupting coherence by converting the attention matrices into identity matrices.
    \item \textit{Policy Contrastive Decoding (PCD) ~\citep{wu2025policy}}: PCD mitigates visual grounding failures by enforcing the model to focus on object-relevant features rather than spurious correlations, through contrasting predictions from original and object-masked observations.
\end{itemize}

\textbf{Implementation Details}. To evaluate the effectiveness of SDN, we adopt three diffusion-based VLA base models: ${\pi_0}$ and \textsc{Gr00t N1.6} for simulation benchmarks, and \textsc{Gr00t N1.5} for real-world deployment. As a training-free approach, SDN is integrated directly into the diffusion action head exclusively at inference time, requiring no further optimization of the underlying models. Extended implementation details are provided in the Appendix.

\subsection{Simulation Experiments}
\vspace{-0.05in}
Table~\ref{tab:simpler_gr00t} presents the performance of SDN on the SIMPLER benchmark using Gr00t N1.6. On the Google Robot platform, SDN consistently surpasses the base model, achieving significant gains, most notably tripling the success rate on the "Open Drawer" task (from 4.0\% to 14.0\%). This success generalizes to the WidowX platform, where SDN enhances performance on challenging tasks like Spoon Towel and Stack Cube, resulting in a 54.5\% average success rate (a 5.1\% absolute improvement over the baseline). Crucially, SDN exhibits a \textbf{\textit{do-no-harm property}}: while baselines like CFG and PCD occasionally degrade performance on specific tasks, SDN provide non-negative gains across the entire benchmark, proving its reliability as a training-free inference-time wrapper. We provide a visualization in Figure~\ref{fig:visualize} of roll-out actions. 

Table~\ref{tab:simpler_pi0} further validates SDN’s efficacy on $\pi_0$, yielding a substantial 8.7\% average gain. While baselines like ACG and WNG often degrade the vanilla model, SDN consistently improves success rates, particularly on high-precision WidowX tasks such as Spoon Towel (90.0\%) and Stack Cube (78.0\%). These results confirm that SDN is \textbf{architecture-agnostic}, providing robust, additive improvements to foundation-scale models. By achieving the highest average success rate across all nine tasks, SDN establishes itself as a superior, "do-no-harm" solution for inference-time VLA adaptation.
 \newcolumntype{C}[1]{>{\centering\arraybackslash}p{#1}}
\newcommand{\gain}[2]{%
  $\hspace{0.22in} \textbf{#1}_{\textcolor{green!60!black}{\uparrow#2}}$%
}
\newcommand{\second}[2]{%
  $\hspace{0.22in} \underline{#1}_{\textcolor{green!60!black}{\uparrow#2}}$%
}
\begin{table}[!htb]
\renewcommand{\arraystretch}{0.9}\setlength{\tabcolsep}{3pt}
\setlength{\tabcolsep}{1pt}
\centering
\caption{\textbf{Evaluation of Gr00t N1.6 on SimplerEnv}. We report the average success rate over 50 rollouts per task. The best results are shown in \textbf{bold}, the second-best results are \underline{underlined}.}
\label{tab:simpler_gr00t}

\resizebox{\linewidth}{!}{
\begin{tabular}{c|ccccc|cccc|c}
\toprule
\multirow{3}{*} {\textbf{Method}} & 
\multicolumn{5}{c|}{\textbf{GoogleRobot}} & 
\multicolumn{4}{c|}{\textbf{WidowX}}&Task(All) \\
\cmidrule(lr){2-6} \cmidrule(lr){7-10} \cmidrule(lr){11-11}

& \textbf{\makecell{Close\\Drawer}}
& \textbf{\makecell{Move\\Near}}
& \textbf{\makecell{Open\\Drawer}}
& \textbf{\makecell{Pick Coke\\Can}}
& \textbf{\makecell{Apple\\Drawer}}
& \textbf{\makecell{Carrot\\Plate}}
& \textbf{\makecell{Eggplant\\Basket}}
& \textbf{\makecell{Spoon\\Towel}}
& \textbf{\makecell{Stack\\Cube}} & \makecell{Avg.} \\

\midrule
 Gr00t\,N1.6 & 40.0 & 76.0 & 4.0 & 90.0 & 18.0 & 60.0 & 94.0 & 58.7 & 4.7 & 49.5 \\
 \midrule
CFG           & \underline{42.0} & \underline{80.0} & \underline{8.0} & 92.0 & 10.0 & 40.0 & 82.0 & 46.0 & 4.0 & 44.9 \\
WNG           & 40.0 & 78.0 & 6.0 & 94.0 & 20.0 & \textbf{68.0} & 78.0 & \underline{64.0} & \textbf{8.0} & 50.7\\
ACG           & 40.7 & 79.3 & 7.3 & \textbf{95.3} & 20.0 & 60.0 & \underline{93.0} & 61.0 & 4.0 & \underline{51.2}\\
PCD           & \underline{42.0} & \underline{80.0} & 4.0 & 92.0 & \underline{12.0} & \underline{62.0} & 74.0 & 60.0 & 6.0 & 48.0 \\
\rowcolor{linecolor2} \textbf{SDN} & \gain{44.0}{4.0} & \hspace{0.01in} \gain{90.0}{26.0} & \gain{14.0}{10.0} & \second{94.0}{4.0} & \gain{20.0}{2.0} & \hspace{0.21in}$60.0_{\textcolor{green!60!black}{\uparrow 0.0}}$ & \gain{94.0}{0.0} & \gain{68.0}{9.3} & \second{6.0}{1.3} & \gain{54.5}{5.1}\\
\bottomrule
\end{tabular}
}
\vspace{-4mm}
\end{table}

 
\begin{table}[!htb]
\renewcommand{\arraystretch}{0.9}\setlength{\tabcolsep}{3pt}
\setlength{\tabcolsep}{1pt}
\centering
\caption{\textbf{Evaluation of $\boldsymbol{\pi_0}$ on SimplerEnv}. We report the average success rate over 50 rollouts per task. The best results are shown in \textbf{bold}. the second-best results are \underline{underlined}.}
\label{tab:simpler_pi0}

\resizebox{\linewidth}{!}{
\begin{tabular}{c|ccccc|cccc|c}
\toprule
\multirow{3}{*} {\textbf{Method}} & 
\multicolumn{5}{c|}{\textbf{GoogleRobot}} & 
\multicolumn{4}{c|}{\textbf{WidowX}}&Task(All) \\
\cmidrule(lr){2-6} \cmidrule(lr){7-10} \cmidrule(lr){11-11}

& \textbf{\makecell{Close\\Drawer}}
& \textbf{\makecell{Move\\Near}}
& \textbf{\makecell{Open\\Drawer}}
& \textbf{\makecell{Pick Coke\\Can}}
& \textbf{\makecell{Apple\\Drawer}}
& \textbf{\makecell{Carrot\\Plate}}
& \textbf{\makecell{Eggplant\\Basket}}
& \textbf{\makecell{Spoon\\Towel}}
& \textbf{\makecell{Stack\\Cube}} & \makecell{Avg.} \\

\midrule
 $\pi_0$ & 75.7 & 67.3 & 38.0 & 84.0 & 17.0 & 58.0 & 86.0 & 80.7 & 68.7 & 63.9 \\
 \midrule
CFG           & 77.0 & 51.0 & 45.0 & 71.0 & 26.0 & 49.0 & 69.0 & 75.0& 39.0 & 55.8 \\
WNG           & 77.0 & 60.0 & 29.0 & 84.0 & 19.0 & 63.0 & 83.0 & 80.0 & 45.0 & 60.0\\
ACG           & 70.0 & 57.0 & 40.0 & 82.0 & 17.0 & \textbf{65.0} & \underline{87.0} & 82.0 & 53.0 & 61.4\\
PCD           & 75.0 & 72.3 & \textbf{56.3} & 88.0 & \textbf{27.3} & 59.7 & \underline{87.0} & \underline{84.0} & \underline{77.0} & \underline{69.6} \\
\rowcolor{linecolor2} \textbf{SDN} & \hspace{0.01in} \gain{87.0}{11.3} & \hspace{0.01in} \gain{78.0}{10.7} & \hspace{0.01in} \second{50.0}{12.0} & \gain{89.0}{5.0} & \second{26.0}{9.0} & \second{63.0}{5.0} & \gain{92.0}{6.0} & \gain{90.0}{9.3} & \gain{78.0}{9.3} & \gain{72.6}{8.7}\\
\bottomrule
\end{tabular}
}

\vspace{-4mm}
\end{table}

\subsection{Real-world Experiments}
\vspace{-0.05in}
Table \ref{tab:realworld_results} evaluates SDN on the Aloha robot using Gr00t N1.5 (See~\ref{sec:real-world-setup} Appendix for setup), demonstrating generalization in real-world settings. SDN achieves an average success rate of 48.33\%, a substantial 18.33\% absolute improvement over the vanilla model and significantly outperforming baselines like CFG, ACG, and PCD. 

Beyond success rates, SDN is the only method to simultaneously improve motion quality. While all other guidance baselines significantly increase Jerk and Action Total Variation (ATV), SDN reduces these values below the vanilla baseline (e.g., Jerk decreased from 12.45 to 11.8). This is particularly evident in the case of ACG, where the dramatic spike in Jerk (up to +114\%) suggests that manually designed heuristic negative directions are highly sensitive and fail to generalize in practice. Such heuristics often push the policy toward out-of-distribution regions, resulting in high-frequency oscillations. In contrast, by using a data-driven contrastive set, SDN ensures logical task grounding while maintaining kinematic feasibility, thereby ensuring mechanical longevity and smoother trajectories for real-robot deployment.
\newcommand{\up}[1]{\textcolor{green!60!black}{\scriptsize$\uparrow$#1}}
\newcommand{\down}[1]{\textcolor{red!60!black}{\scriptsize$\downarrow$#1}}

\newcommand{\gup}[1]{\textcolor{red!60!black}{\scriptsize$\uparrow$#1}}
\newcommand{\gdown}[1]{\textcolor{green!60!black}{\scriptsize$\downarrow$#1}}

\renewcommand{\arraystretch}{0.9}
\setlength{\tabcolsep}{3pt}

\begin{table}[H]
\centering
\vspace{-0.1in}
\caption{\textbf{Performance comparison on real-world tasks.} Left: task success rate (\%). Right: motion quality (lower is better).}
\label{tab:realworld_results}
\vspace{-0.1in}
\begin{minipage}{0.48\linewidth}
\centering
\caption*{Task Success (\%)}
\vspace{0.05in}
\resizebox{1.05\linewidth}{!}{
\begin{tabular}{c|cc|c}
\toprule
\textbf{Method} & \textbf{Cube2Drawer} & \textbf{Banana2Pot} & \textbf{Avg} \\
\midrule
Gr00t N1.5 & 30.00 & 30.00 & 30.00 \\
\midrule
CFG & 40.00 {\up{10.00}} & 33.33 {\up{3.33}} & 36.67 {\up{6.67}} \\
ACG & 36.67 {\up{6.67}} & 46.67 {\up{16.67}} & 41.67 {\up{11.67}} \\
PCD & 40.00 {\up{10.00}} & 43.33 {\up{13.33}} & 41.67 {\up{11.67}} \\
\midrule
\rowcolor{linecolor2}
\textbf{ SDN - Smoothness} & 50.00 {\up{20.00}} & 43.33 {\up{13.33}} & 46.67 {\up{16.67}} \\
\rowcolor{linecolor2}
\textbf{SDN - Grounding} & 50.00 {\up{20.00}} & 36.67 {\up{6.67}} & 43.33 {\up{13.33}} \\
\rowcolor{linecolor2}
\textbf{SDN - Full} 
& \textbf{53.33 {\up{23.33}}} 
& \textbf{43.33 {\up{13.33}}} 
& \textbf{48.33 {\up{18.33}}} \\
\bottomrule
\end{tabular}
}
\end{minipage}
\hfill
\begin{minipage}{0.48\linewidth}
\centering
\caption*{Motion Quality (Jerk $\downarrow$, ATV ($\times 10^2$)$\downarrow$)}
\vspace{0.05in}
\resizebox{\linewidth}{!}{
\begin{tabular}{c|cc|cc}
\toprule
\textbf{Method} & \multicolumn{2}{c|}{\textbf{Cube2Drawer}} & \multicolumn{2}{c}{\textbf{Banana2Pot}} \\
& Jerk & ATV & Jerk & ATV \\
\midrule
Gr00t N1.5 
& 12.45 
& 4.87 
& 7.07 
& 4.80 \\
\midrule

CFG 
& 12.79 {\gup{2.7\%}} 
& 4.90 {\gup{0.6\%}} 
& 7.18 {\gup{1.5\%}} 
& 4.82 {\gup{0.4\%}} \\

ACG 
& 18.1 {\gup{45.4\%}} 
& 6.00 {\gup{23.2\%}} 
& 15.17 {\gup{114\%}} 
& 6.00 {\gup{25.0\%}} \\

PCD 
& 12.47 {\gup{0.1\%}} 
& 4.88 {\gup{0.2\%}} 
& 7.08 {\gup{0.1\%}} 
& 4.80 {\gup{0.0\%}} \\

\midrule
\rowcolor{linecolor2}
\textbf{SDN}
& \textbf{11.80 {\gdown{5.2\%}}}
& \textbf{4.80 {\gdown{1.4\%}}}
& \textbf{6.54 {\gdown{7.5\%}}}
& \textbf{4.70 {\gdown{2.1\%}}} \\

\bottomrule
\end{tabular}
}
\end{minipage}

\vspace{-4mm}
\end{table}

\vspace{-4mm}

\vspace{0.05in}
\begin{figure}[!htb]
    \centering
    \begin{minipage}[t]{0.6\linewidth}
        \centering
        \includegraphics[width=\linewidth]{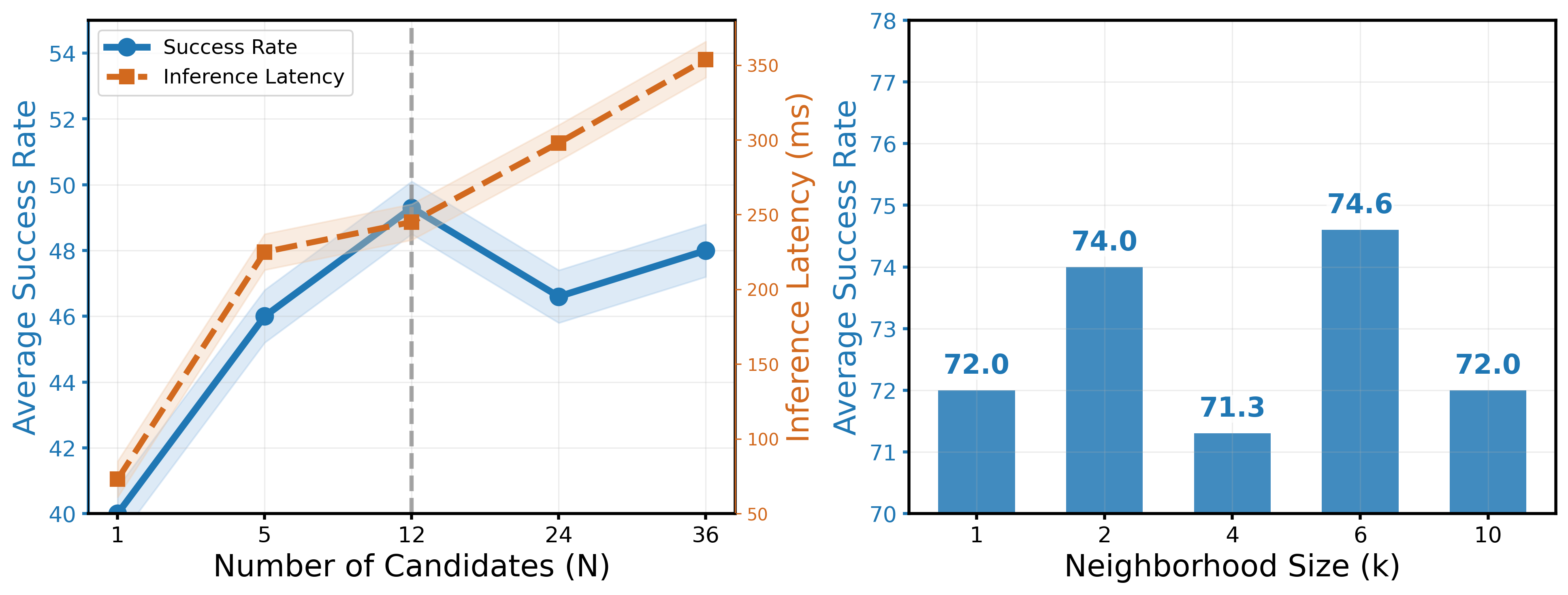}
        \caption{(left): trade-off between number of samples $N$ and inference latency, (right): sensitivity of SDN w.r.t different k-NN k.}
        \label{fig:knn_sample}
    \end{minipage}
    \hspace{0.05in}
    \begin{minipage}[t]{0.36\linewidth}
        \centering
        \includegraphics[width=\linewidth]{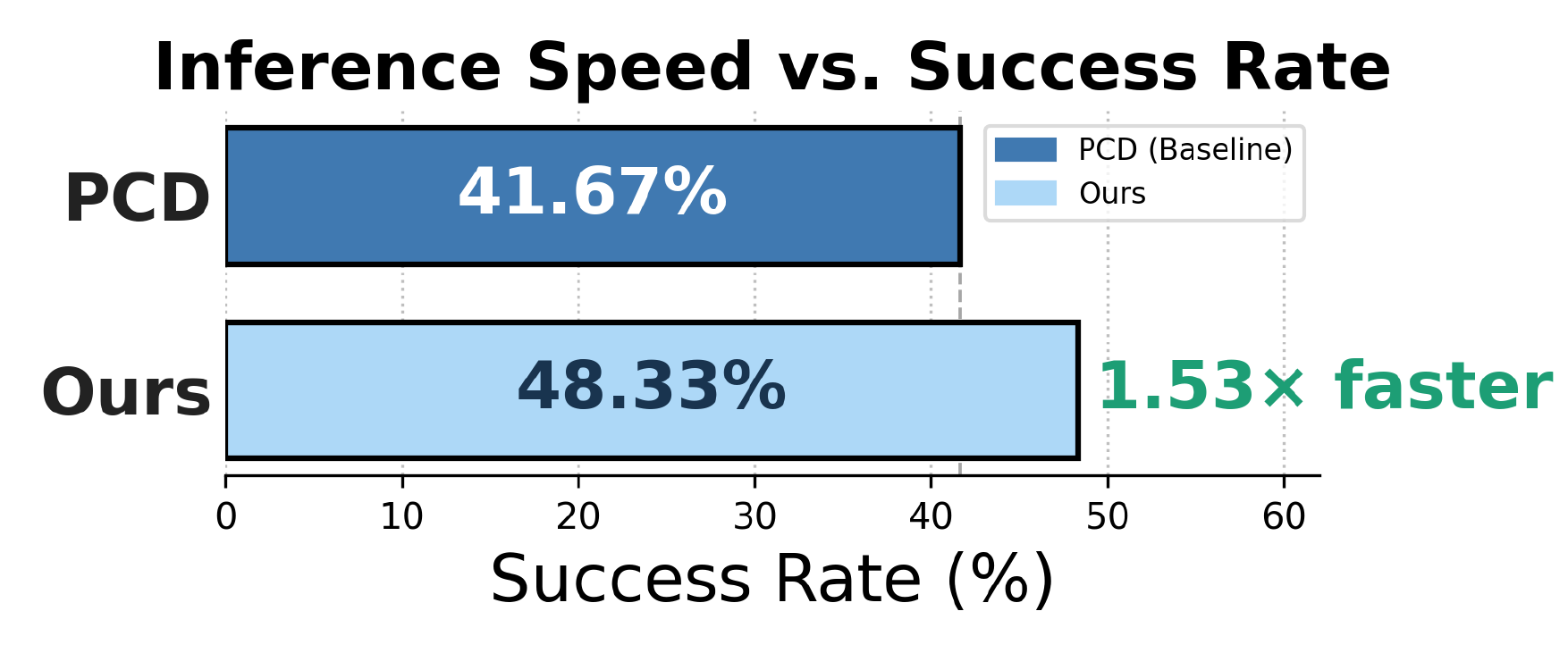}
        \caption{Efficiency comparison between PCD and ours measured on real-robot experiments.}
        \label{fig:sdn_pcd}
    \end{minipage}
\end{figure}
\textbf{5.4 \quad Ablation Study}
\paragraph{Effect of Stage 1 vs. Stage 2.}

Table~\ref{tab:realworld_results} (left) ablates SDN’s two-stage pipeline. SDN-Grounding (Stage~1) mitigates visual hallucinations through contrastive filtering, improving performance by +13.33\% on average, while SDN-Smoothness (Stage~2) yields an even larger gain (+16.67\%), emphasizing the importance of kinematic feasibility. When combining both stages, we achieve the best performance (48.33\% avg.), revealing strong synergy between semantic grounding and motion stability. The largest improvement appears on Cube2Drawer, with a +23.33\% gain over the base VLA. We provide additional synergy analyses in the Appendix.

\renewcommand{\arraystretch}{0.9}
\setlength{\tabcolsep}{3pt}

\begin{table}[H]
\centering
\vspace{-0.18in}
\caption{\textbf{Performance Comparison of $\pi_0$ and Gr00t with Object-Masked Observations in SimplerEnv} Success rate (\%).}
\label{tab:base_masking_image}
\resizebox{1.0\linewidth}{!}{
\begin{tabular}{c|ccccc|cccc|c}
\toprule
\multirow{2}{*}{\textbf{Method}} &
\multicolumn{5}{c|}{\textbf{Google Robot}} &
\multicolumn{4}{c|}{\textbf{Widow X}} &
\multirow{2}{*}{\textbf{Avg.}} \\
\cmidrule(lr){2-6} \cmidrule(lr){7-10}
& \makecell{Close\\Drawer} & \makecell{Move\\Near} & \makecell{Open\\Drawer} & \makecell{Pick Coke\\Can} & \makecell{Apple\\Drawer}
& \makecell{Carrot\\Plate} & \makecell{Eggplant\\Basket} & \makecell{Spoon\\Towel} & \makecell{Stack\\Cube} & \\
\midrule
 $\pi_0$                  & 75.7 & 67.3 & 38.0 & 84.0 & 17.0 & 58.0 & 86.0 & 80.7 & 68.7 & 63.9 \\
 $\pi_0$ + Masked Object
& 55.0 {\down{20.7}}
& 50.0 {\down{17.3}}
& 10.0 {\down{28.0}}
& 15.0 {\down{69.0}}
& 0.0  {\down{17.0}}
& 20.0 {\down{38.0}}
& 5.0  {\down{81.0}}
& 65.0 {\down{15.7}}
& 0.0  {\down{68.7}}
& 24.4 {\down{39.5}} \\
\midrule
Gr00t                 & 40.0 & 76.0 & 4.0 & 90.0 & 18.0 & 60.0 & 94.0 & 58.7 & 4.7 & 49.51 \\
Gr00t + Masked Object
& 18.0 {\down{22.7}}
& 2.0 {\down{74.0}}
& 0.0 {\down{4.0}}
& 8.0 {\down{82.0}}
& 0.0  {\down{18.0}}
& 2.0 {\down{58.0}}
& 0.0  {\down{94.0}}
& 6.0 {\down{52.7}}
& 2.0  {\down{2.7}}
& 4.22 {\down{45.28}} \\
\bottomrule
\end{tabular}
}
\vspace{-0.1in}
\end{table}

\paragraph{Effect of Masking Observation.}
We analyze $\pi_0$ and Gr00t under masked observations (Table~\ref{tab:base_masking_image}). We observe that masking the target object (Fig.~\ref{fig:visual_masking}) triggers a \textbf{catastrophic performance collapse}: $\pi_0$ success rates plummet by 39.5\%, while Gr00t declines by 45.3\%, with near-total failures in precision tasks like Pick Coke Can. This confirms that VLA models rely heavily on ungrounded "visual shortcuts." We utilize this sensitivity to define our Stage 1 Grounding Filter, where induced hallucinations serve as a precise negative action set to prune ungrounded candidates during inference.

\begin{wrapfigure}{r}{0.5\textwidth}
            \centering
        \resizebox{0.9\linewidth}{!}{
        \includegraphics[width=\linewidth]{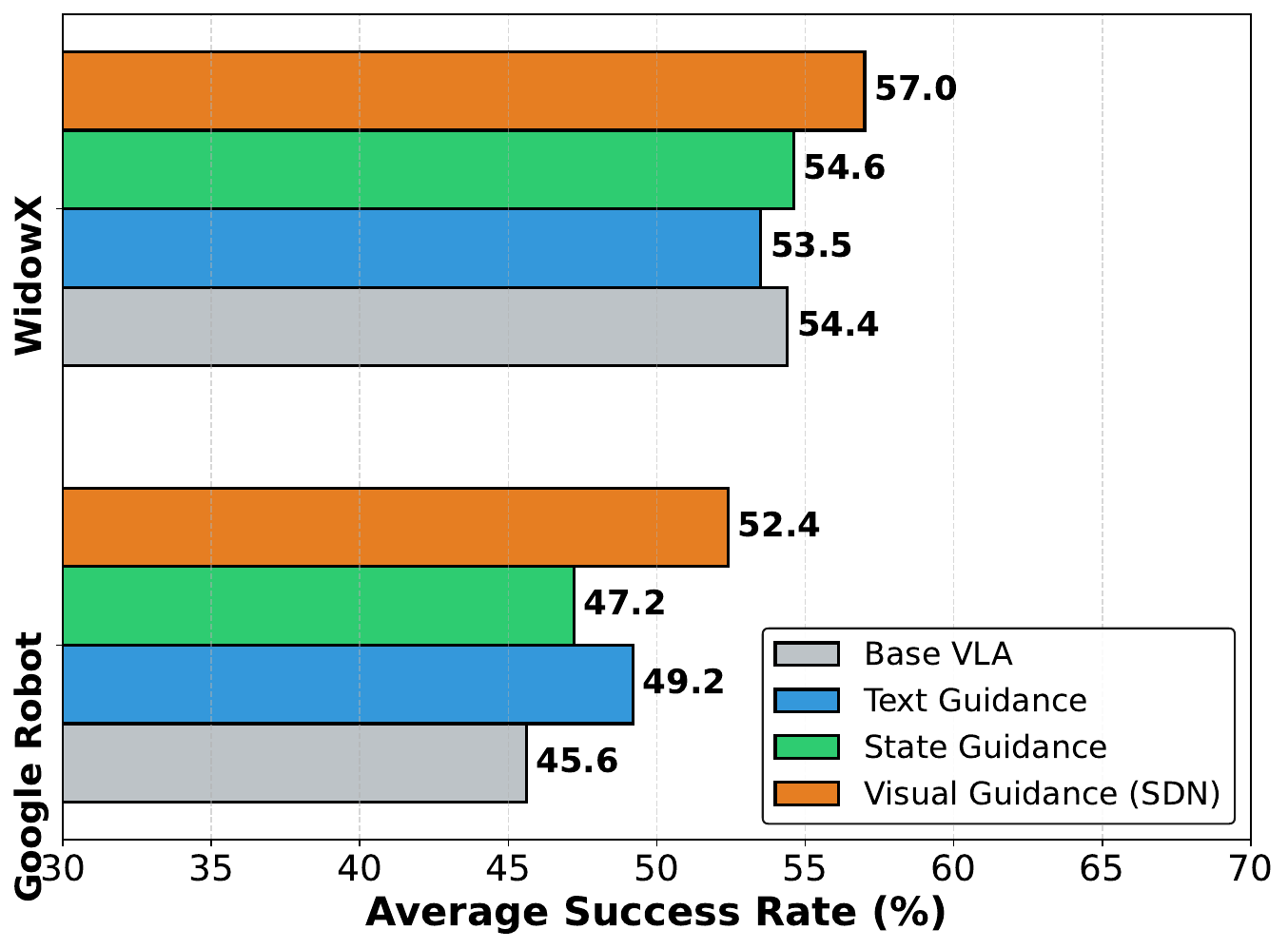}}
        \caption{We compare semantic (\textit{text-based}) and \textit{proprioceptive} (state-based) corruption strategies against our proposed visual grounding approach. While all modalities offer improvements, visual guidance remains the most potent anchor for mitigating VLA hallucinations.}
        \vspace{-0.2in}
        \label{fig:guidance_methods}
\end{wrapfigure}

\paragraph{Effect of different Corruption Strategies.}

Superiority of Visual Grounding. We explore SDN's versatility by evaluating semantic (text) and proprioceptive (state) corruption alongside our proposed visual masking. As shown in Fig.~\ref{fig:guidance_methods}, while all modalities provide improvements over the base VLA, visual guidance consistently yields the most significant gains, establishing it as the most potent anchor for mitigating hallucinations. For instance, visual grounding provides the highest success rates across both platforms, outperforming text-based (+3.6\% on Google Robot) and state-based (+6.1\% on WidowX) anchors. Based on this empirical hierarchy, we utilize visual object masking as the primary grounding signal for SDN.

SDN’s structural approach also offers distinct advantages over generative methods like PCD (Fig.~\ref{fig:sdn_pcd}). Unlike PCD, which requires computationally intensive image in-painting to simulate negative scenes, SDN employs direct, modality-specific masking. This eliminates generative artifacts and significant latency, allowing SDN to achieve a higher success rate (48.33\%) while operating 1.53$\times$ faster than PCD. Consequently, SDN provides a cleaner and more efficient pathway for grounding VLA actions in real-time sensorimotor streams.

\paragraph{Effect of sample sizes in Stage 1 and KNN.}
A grid search reveals that $N=12$ provides the optimal trade-off between grounding strength and inference speed, with larger values yielding diminishing returns. Regarding the neighborhood size, SDN remains robust for $k \leq 10$; beyond this, performance declines as an overly large neighborhood introduces noise into the contrastive signal. These findings confirm that SDN is highly stable within a practical operational range.

\section{Discussion \& Limitations}
\vspace{-0.05in}
We have shown that the initial noise in diffusion-based VLA as a controllable degree of freedom by exploiting spurious visual shortcuts and producing kinematically unstable, "jerky" motions. This mechanism demonstrates consistent gains in success rate across settings, both in simulation and in real-world setups, and transfer across diverse backbones.  Despite these gains, SDN might suffer from  \textit{corruption specificity}, i.e., object-centric visual shortcuts may fail to capture hallucinations induced by dynamic lighting, texture shifts, or broader environmental distractions. In addition, SDN currently relies on a fixed number of noise candidates at inference time, limiting adaptivity to varying task complexity and uncertainty. We believe future work on adaptive candidate allocation and richer multimodal corruption modeling could further improve the robustness, efficiency, and generalization of diffusion-based robotic policies.

\clearpage
\bibliographystyle{plain}
\bibliography{references}


\clearpage
\appendix

\section{Technical appendices and supplementary material}

\subsection{Implementation details}
We evaluate SDN on top of GR00T N1.6 \cite{bjorck2025gr00t} and Pi0 \cite{black2024pi_0} policies for SimplerEnv, and GR00T N1.5 \cite{bjorck2025gr00t} for real-world experiments, without any additional training. For simulation, we use the official pretrained checkpoints provided by the original implementations. For real-world experiments, we fine-tune GR00T N1.5 \cite{bjorck2025gr00t}to adapt to our robotic platform. All SimplerEnv evaluations are conducted using a single environment instance. \footnote{https://huggingface.co/nvidia/GR00T-N1.6-fractal} \footnote{https://huggingface.co/nvidia/GR00T-N1.6-bridge}
\footnote{https://github.com/allenzren/open-pi-zero} 

In SDN, we sample \(N=12\) action chunk candidates from different Gaussian noise seeds. Each candidate is generated independently using the same conditional policy while varying only the initial diffusion noise. We then perform the Stage-1 grounding filter using kNN-based contrastive scoring with \(k \in \{6, 10\}\). Based on the grounding scores, we retain the top-\(M \in \{3,5\}\) candidates that are simultaneously closest to the grounded trajectory distribution and farthest from the perturbed trajectory distribution. For Stage-2 refinement, we evaluate the remaining \(M\) candidates using the kinematic smoothness metric JerkRMS \cite{Flash1688} over extended predicted action chunks. The extended chunk number is chosen to be greater than the executed action chunk in the environment, while remaining within the maximum prediction horizon supported by the base policy. For both SimplerEnv and real-world experiments, the extended chunk number is searched over \(\{4,5,10\}\).

\paragraph{Object Masking Strategy.}
We use Grounding-DINO \cite{ren2024dino} to detect task-relevant objects and SAM2 \cite{ravi2024sam} to track object regions across timesteps. We then construct perturbed observations using a simple masking strategy that overlays zero-valued bounding boxes over all task-relevant objects, effectively removing the target object information. \footnote{https://huggingface.co/IDEA-Research/grounding-dino-base} \footnote{https://huggingface.co/facebook/sam2-hiera-large-hf}

\begin{algorithm}
\caption{Selected Diffusion Noise (SDN) Inference}
\label{alg:sdn}
\begin{algorithmic}
\REQUIRE Policy $\pi_\theta$, observation $\state$, instruction $\instruction$
\REQUIRE Number of samples $N$, filter size $M$

\STATE Initialize $\mathcal{G} \gets \emptyset$, $\mathcal{B} \gets \emptyset$

\FOR{$i = 1$ to $N$}
    \STATE Sample $\actchunk_i^0 \sim \mathcal{N}(0, I)$
    \STATE Generate $\actchunk_i \sim \pi_\theta(\cdot \mid \state, \instruction)$
    \STATE Generate $\tilde{\actchunk}_i \sim \pi_\theta(\cdot \mid \state_{\text{neg}}, \instruction)$
    \STATE $\mathcal{G} \gets \mathcal{G} \cup \{\actchunk_i\}$
    \STATE $\mathcal{B} \gets \mathcal{B} \cup \{\tilde{\actchunk}_i\}$
\ENDFOR

\STATE Compute $R_{\text{ground}}^{(i)}$ for all $\actchunk_i \in \mathcal{G}$
\STATE $\mathcal{G}_M \gets \text{TopM}(\mathcal{G})$

\FOR{$\actchunk_i \in \mathcal{G}_M$}
    \STATE Compute $S(\actchunk_i)$
\ENDFOR

\STATE $\actchunk^* = \arg\min_{\actchunk_i \in \mathcal{G}_M} S(\actchunk_i)$
\STATE return $\actchunk^*$
\end{algorithmic}
\end{algorithm}

\subsection{Real-world experiements}
\label{sec:real-world-setup}
We evaluate our method on two real-world robotic manipulation tasks using a single-arm mobile ALOHA robot. Each task is observed from both a \textit{static camera view} and a \textit{wrist-mounted camera}. Task definitions, initialization procedures, and dataset statistics are described below.

\begin{figure*}[th]
    \centering
    \includegraphics[width=\linewidth]{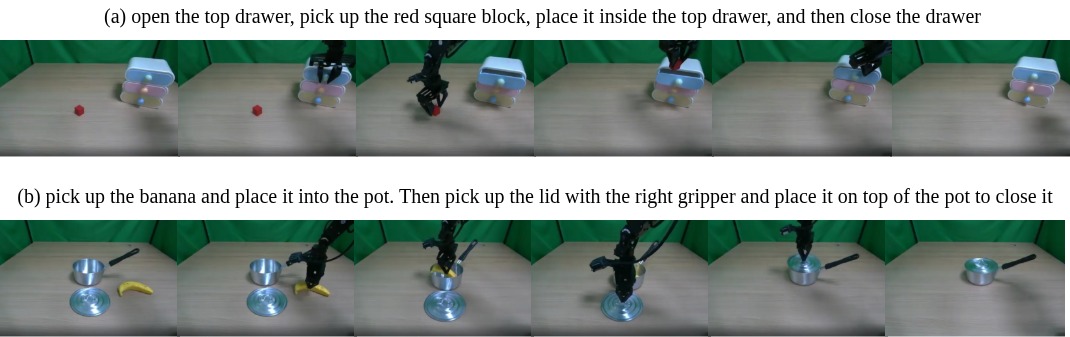}
    \caption{
   \textbf{Real-world tasks}  
   }
 \label{fig:realworld_tasks}
\end{figure*}

\begin{figure}
    \centering
    \includegraphics[width=1\linewidth]{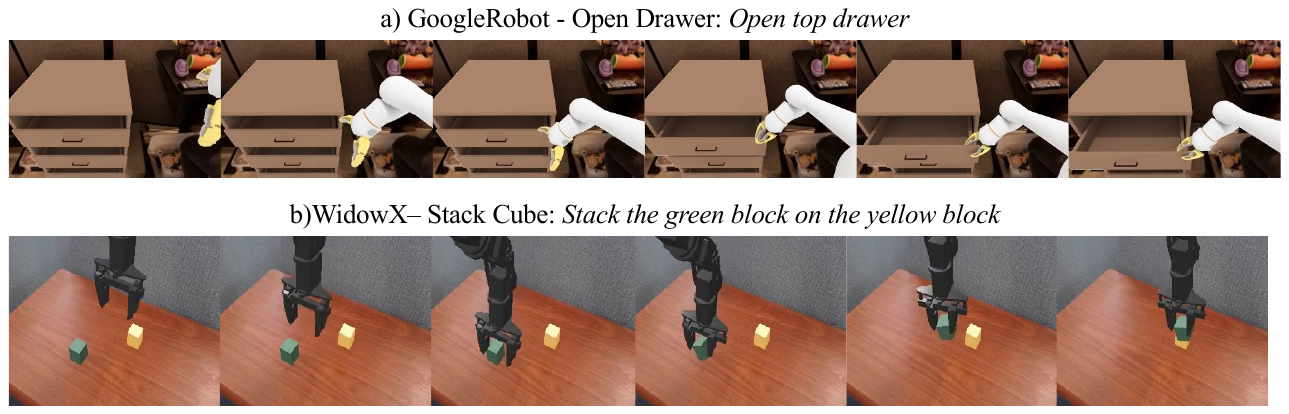}
    \caption{Simulation tasks in SIMPLER}
    \label{fig:sim_example}
\end{figure}

\noindent\textbf{Cube2Drawer.}
The robot first opens the top drawer, then picks up a red cube from the table, places it into the drawer, and finally closes the drawer. Each episode contains an average of 253 steps, and we collect a total of 150 episodes for this task.

\noindent\textbf{Banana2Pot.}
The robot picks up a banana from the table, places it into a pot, then picks up the lid and closes the pot. Each episode contains an average of 280 steps, and we collect a total of 150 episodes for this task.

\noindent\textbf{Training and Evaluation.}
We fine-tune Gr00tN1.5 on the combined dataset for a total of 60k training steps with a batch size of 32 (approximately 20 epochs) and a chunk size of 50. Training takes approximately 7 hours on a single H100 GPU. During evaluation, we deploy the model on a RTX 4070 GPU and execute 10 actions per inference step. We evaluate 30 trials per task for each methods, with results summarized in Table~\ref{tab:realworld_results}.

\subsection{Simulation Experiments}

We evaluate our method on the SIMPLER benchmark \citep{li2024evaluating}. For each method, we conduct 50 evaluation trials per task, and report the results in Table~\ref{tab:simpler_gr00t} and Table~\ref{tab:simpler_pi0}. Qualitative examples are shown in Figure~\ref{fig:sim_example}. The evaluation consists of five tasks on the Google Robot platform and four tasks on the WidowX platform.

For the Google Robot platform, we evaluate the following tasks:

\textbf{Close/Open Drawer}: The robot interacts with a three-drawer cabinet. Given a language instruction, the robot must identify the target drawer and execute the corresponding opening or closing action.

\textbf{Move Near}: Three objects are placed on a tabletop: a source object, a target object, and a distractor object. The robot is required to move the source object close to the target object.

\textbf{Pick Coke Can}: An empty Coke can is placed in varying poses on the table. The robot must grasp the can and lift it successfully.

\textbf{Apple Drawer}: An apple is placed on top of a three-drawer cabinet. The robot must open the top drawer, grasp the apple, and place it inside the drawer.

For the WidowX platform, we evaluate the following four tasks:

\textbf{Carrot Plate}: A carrot and a plate are placed on a table. The robot must pick up the carrot and place it onto the plate.

\textbf{Eggplant Basket}: An eggplant and a basket are placed on a table. The robot is required to grasp the eggplant and place it into the basket.

\textbf{Spoon Towel}: A spoon and a towel are placed on a table. The robot must grasp the spoon and place it onto the towel.

\textbf{Stack Cube}: Given two cubes, one yellow and one green, the robot must place the green cube on top of the yellow cube.

Compared to the baseline methods, SDN consistently improves upon the base model. We further provide a qualitative example in Figure~\ref{fig:google_robot_fail_success}, where SDN enables successful task execution, whereas the base model without SDN fails early during the grasping stage and is unable to complete the task.

We further compare the effects of the grounding filter and kinematic stability refinement in SDN, as well as their sequential combination, with detailed results presented in Figure~\ref{fig:sim_2_stage}. The results validate the effectiveness of the proposed two-stage pipeline, which generally yields greater improvements than applying either stage independently. While the second stage primarily enhances kinematic stability, the first stage plays a crucial role in ensuring that robot actions remain faithful to task descriptions and in suppressing spurious action patterns. The sequential integration of both stages ultimately contributes to the strong overall performance of SDN.

\begin{figure}[H]
    \centering
    \includegraphics[width=\linewidth]{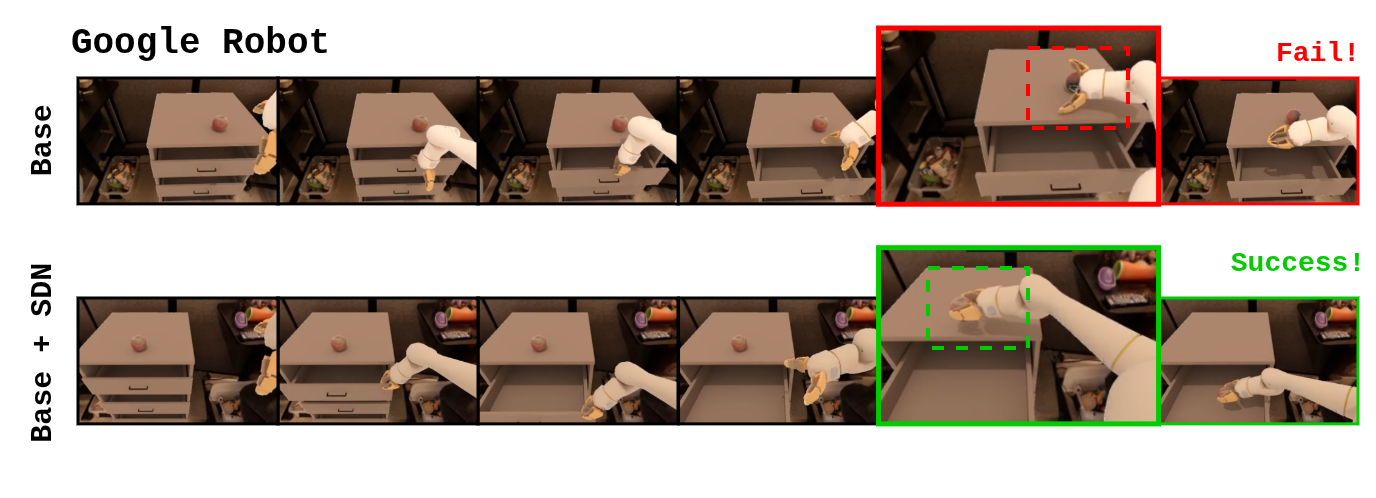}
    \caption{
   Comparison of the base model and Base + SDN (training-free guidance) on SimplerEnv Google Robot  
   }
 \label{fig:google_robot_fail_success}
\end{figure}

\begin{figure}[H]
    \centering
    \includegraphics[width=1\linewidth]{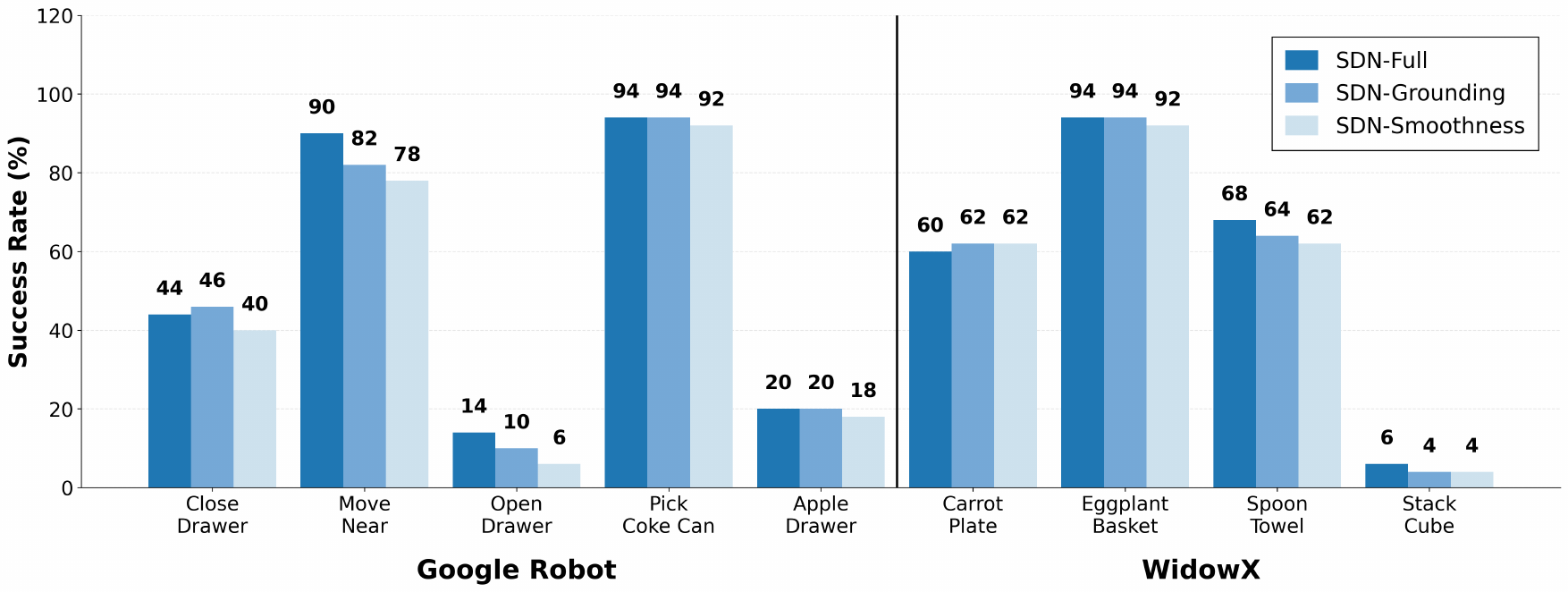}
    \caption{Performance comparison at different stages on simulation evaluation.}
    \label{fig:sim_2_stage}
\end{figure}

\subsection{Efficiency comparison between SDN and other methods.}


\begin{figure}
    \centering
    \includegraphics[width=0.6\linewidth]{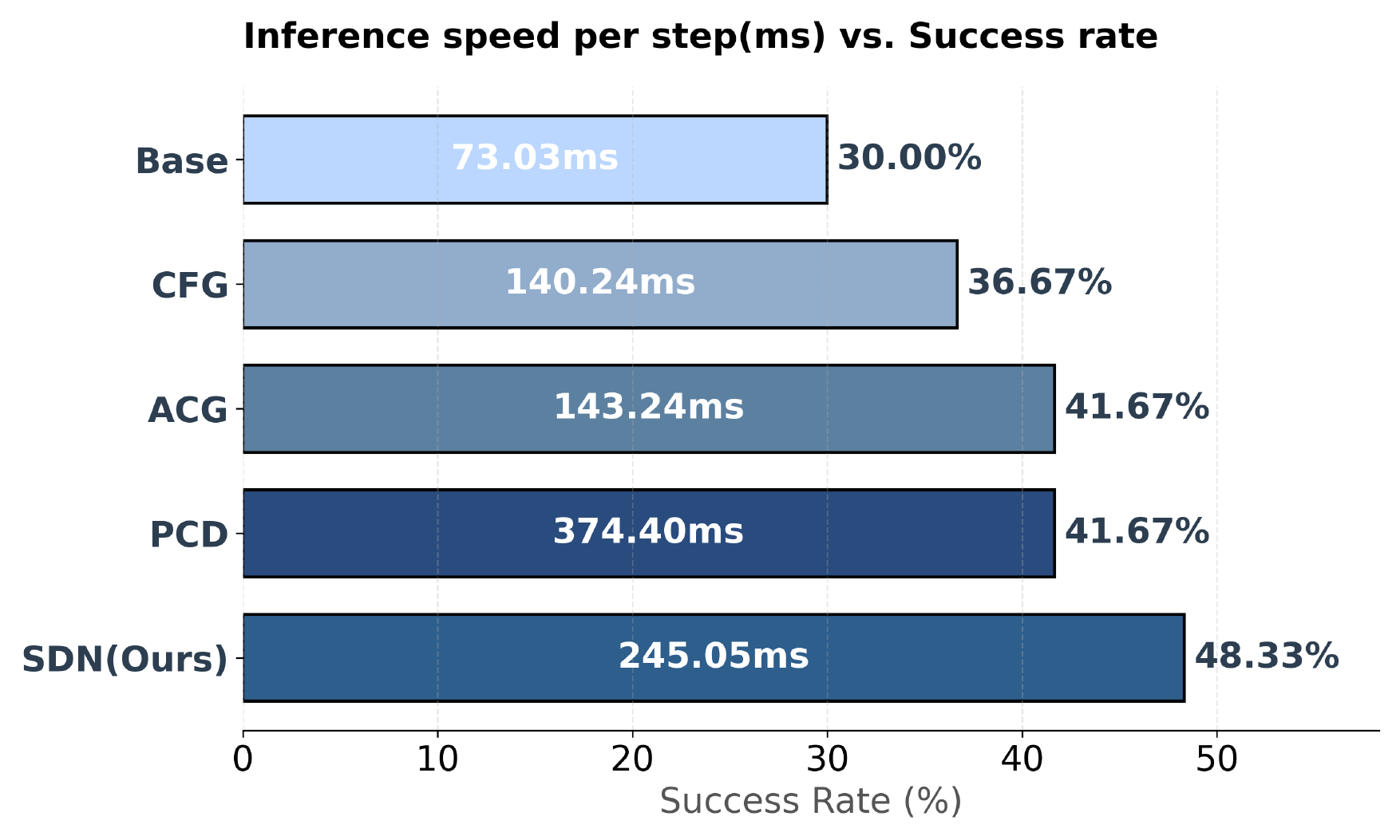}
    \caption{Comparison of per-step inference speed and task success rate across different methods.}
    \label{fig:all_efficiency}
\end{figure}

{Figure~\ref{fig:all_efficiency}} shows the per-step inference latency and task success rate across different methods. While baselines and CFG achieve faster inference speeds, they exhibit noticeably lower task performance. In contrast, SDN (Ours) attains the highest success rate of $48.33\%$ with a moderate inference cost of $245.05$ ms per step.

\subsection{Theoretical Analysis of SDN}
\label{subsec:theo}
In this section, we provide some theoretical insights into our framework under the behaviors of $N$ samples.

\begin{tcolorbox}[title=Asymptotic Consistency of SDN Contrastive Grounding Score]

\begin{theorem}
\label{thm:mixture}
Let $p_{\mathcal{G}}$ and $p_{\mathcal{B}}$ be absolutely continuous densities on a compact support $\Omega \subset \mathbb{R}^D$, bounded away from zero and infinity.

Let $\mathcal{G}_N = \{\actchunk_i\}_{i=1}^N \sim p_{\mathcal{G}}$ and 
$\mathcal{B}_N = \{\actchunk_j\}_{j=1}^N \sim p_{\mathcal{B}}$ be i.i.d. samples.

For any query $\actchunk \in \Omega$, define the $k$-nearest neighbor distances:
\[
d_{\mathcal{G}}^{(N)}(\actchunk)
:=
\frac{1}{k} \sum_{j \in \Omega_k^{\mathcal{G}}(\actchunk)}
\|\actchunk - \actchunk_j\|_2,
\quad
d_{\mathcal{B}}^{(N)}(\actchunk)
:=
\frac{1}{k} \sum_{j \in \Omega_k^{\mathcal{B}}(\actchunk)}
\|\actchunk - \actchunk_j\|_2.
\]

Define the SDN grounding score:
\[
R_{\mathrm{ground}}^{(N)}(\actchunk)
:=
d_{\mathcal{B}}^{(N)}(\actchunk)
-
d_{\mathcal{G}}^{(N)}(\actchunk).
\]

Assume $k \to \infty$, $N \to \infty$, and $k/N \to 0$. Then for any fixed $\actchunk \in \Omega$,
\begin{equation}
R_{\mathrm{ground}}^{(N)}(\actchunk)
=
C_D
\left(\frac{k}{N}\right)^{1/D}
\left(
p_{\mathcal{B}}(\actchunk)^{-1/D}
-
p_{\mathcal{G}}(\actchunk)^{-1/D}
\right)
(1 + o_P(1)),
\end{equation}
where $C_D = V_D^{-1/D}$ depends only on dimension.

In particular, the ranking induced by $R_{\mathrm{ground}}^{(N)}(\actchunk)$ is asymptotically consistent with the population score
\[
R(\actchunk)
:=
p_{\mathcal{B}}(\actchunk)^{-1/D}
-
p_{\mathcal{G}}(\actchunk)^{-1/D}.
\]
\end{theorem}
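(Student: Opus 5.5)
The plan is the classical pointwise analysis of $k$-NN distances, applied separately to $\mathcal{G}_N$ and $\mathcal{B}_N$ and then subtracted. Fix a query $\actchunk$ and write $p$ for either $p_{\mathcal{G}}$ or $p_{\mathcal{B}}$.

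I will need three things the statement does not literally provide.
\begin{itemize}
\item \emph{Continuity at $\actchunk$.} Absolute continuity alone is not enough, so I assume $p$ is continuous at $\actchunk$ (or that $\actchunk$ is a Lebesgue point), and that $\actchunk$ lies in the interior of $\Omega$ (or $\Omega$ satisfies a cone condition). Then
\[
F(r):=\Pr(\|\actchunk-\actchunk_j\|_2\le r)=p(\actchunk)V_D r^D\,(1+o(1)), \qquad r\to 0 .
\]
\item \emph{The constant.} The averaging in the last step produces a factor $D/(D+1)$, so $C_D$ should be $\frac{D}{D+1}V_D^{-1/D}$. This factor is common to both terms, so it does not affect the ranking claim.
\item \emph{The form of the conclusion.} The multiplicative form $(1+o_P(1))$ is only meaningful when $p_{\mathcal{B}}(\actchunk)\neq p_{\mathcal{G}}(\actchunk)$. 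In general I will prove the additive version, with error $o_P((k/N)^{1/D})$.
\end{itemize}

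\textbf{Step 1 (reduction to uniform order statistics).} Let $R_{(1)}\le\dots\le R_{(N)}$ be the ordered distances from $\actchunk$ to the sample. By the probability integral transform, $U_{(j)}:=F(R_{(j)})$ are uniform order statistics, with $U_{(j)}\sim\mathrm{Beta}(j,N-j+1)$. For $\mathcal{G}$, the leave-one-out convention only changes $N$ to $N-1$, which is asymptotically irrelevant.

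\textbf{Step 2 (concentration of $U_{(j)}$).} Since $k/N\to0$, we have $U_{(k)}\to0$ and hence $R_{(k)}\to0$ in probability. Beta concentration gives $\mathrm{Var}(U_{(j)})/\mathbb{E}[U_{(j)}]^2\approx 1/j$. Using a union bound, or the Rényi representation of uniform spacings, I get
\[
\max_{\varepsilon k\le j\le k}\left|\frac{U_{(j)}}{j/N}-1\right|\xrightarrow{P}0 \qquad\text{for each fixed }\varepsilon>0 .
\]

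\textbf{Step 3 (inversion).} Inverting $F$ via the local expansion gives, uniformly on the same range,
\[
R_{(j)}=\left(\frac{j}{N\,p(\actchunk)V_D}\right)^{1/D}(1+o_P(1)).
\]

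\textbf{Step 4 (averaging).} A Riemann-sum argument gives
\[
\frac1k\sum_{j\le k}\left(\frac{j}{N}\right)^{1/D}=\frac{D}{D+1}\left(\frac kN\right)^{1/D}(1+o(1)).
\]
Combining this with Step 3 yields
\[
d^{(N)}(\actchunk)=\frac{D}{D+1}V_D^{-1/D}\left(\frac kN\right)^{1/D}p(\actchunk)^{-1/D}(1+o_P(1)).
\]
The proof is finished by applying this to $\mathcal{G}_N$ and $\mathcal{B}_N$ and subtracting. Dividing by the common positive factor $C_D(k/N)^{1/D}$ shows that the normalized score converges in probability to $R(\actchunk)$. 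Consequently, for any finite set of queries with distinct population scores, the empirical ranking agrees with the population ranking with probability tending to one.

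\textbf{Main obstacle.} The hardest part is making Step 2 uniform over all $j\le k$, because for small $j$ the ratio $U_{(j)}/(j/N)$ does not concentrate. I will split the sum at $j=\varepsilon k$.
\begin{itemize}
\item \emph{Head ($j<\varepsilon k$).} By Jensen, $\mathbb{E}[U_{(j)}^{1/D}]\le (j/N)^{1/D}$, so by Markov and the crude bound $R_{(j)}\lesssim U_{(j)}^{1/D}$ (valid since $p$ is bounded below), this part of the average is $O_P(\varepsilon^{1+1/D}(k/N)^{1/D})$.
\item \emph{Tail ($j\ge\varepsilon k$).} This part is handled by the uniform concentration of Step 2.
\end{itemize}
Letting $\varepsilon\to0$ after $N\to\infty$ closes the argument.
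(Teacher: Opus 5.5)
Your proposal is correct for the corrected statement you formulate, and it takes a different and more careful route than the paper. The paper's route is shorter because it relies on one classical fact, but that reliance is where it goes wrong. It invokes pointwise consistency of $k/(N V_D \rho_k(\actchunk)^D)$, inverts this to get the asymptotics of the single order statistic $\rho_k = R_{(k)}$, and then asserts that the average $k$-NN distance is asymptotically equivalent to $\rho_k$. That assertion is false, and your Step~4 shows why. Averaging $R_{(j)} \approx (j/(N p V_D))^{1/D}$ over $j \le k$ gives $d^{(N)}(\actchunk) = \frac{D}{D+1}\rho_k(1+o_P(1))$, so the stated $C_D = V_D^{-1/D}$ is off by the factor $D/(D+1)$. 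Consistency of $\rho_k$ alone says nothing about the average. You need all the intermediate order statistics, uniformly in $j$, and the classical result fails for bounded $j$; your head/tail split is exactly what supplies this. The paper has two further gaps. It subtracts two $(1+o_P(1))$ expressions while keeping a multiplicative error, which silently requires $p_{\mathcal{B}}(\actchunk) \neq p_{\mathcal{G}}(\actchunk)$. It also leaves implicit the continuity and interior conditions that pointwise $k$-NN consistency needs. Your additive formulation and explicit hypotheses fix both. None of these corrections affects the ranking claim, because $D/(D+1)$ is a common factor.

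Two small technical remarks.
\begin{itemize}
\item In Step~2, Chebyshev plus a union bound over $\varepsilon k \le j \le k$ gives $\sum_j 1/(j\delta^2) \approx \log(1/\varepsilon)/\delta^2$, which does not vanish. Use instead exponential Beta/Gamma tails, or monotonicity of $j \mapsto U_{(j)}$ on a finite geometric grid, or your R\'enyi representation together with Kolmogorov's maximal inequality.
\item The head can be bounded more simply by $\varepsilon R_{(\lceil \varepsilon k\rceil)}$, using monotonicity of the order statistics together with your tail asymptotics. This avoids needing the global bound $F(r) \gtrsim r^D$.
\end{itemize}
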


\end{tcolorbox}

\vspace{-0.2in}
\begin{proof}
From classical $k$-NN density estimation,
\[
\hat{p}_{\mathcal{S}_N}(\actchunk)
=
\frac{k}{N V_D \rho_k(\actchunk)^D}
\xrightarrow{P}
p_{\mathcal{S}}(\actchunk),
\]
where $\rho_k(\actchunk)$ is the $k$-NN radius.

This implies
\[
\rho_k(\actchunk)
=
\left(\frac{k}{N V_D p_{\mathcal{S}}(\actchunk)}\right)^{1/D}
(1 + o_P(1)).
\]

Since the average $k$-NN distance is asymptotically equivalent to the radius,
\[
d_{\mathcal{S}}^{(N)}(\actchunk)
=
\rho_k(\actchunk)(1 + o_P(1)),
\]
we obtain
\[
d_{\mathcal{S}}^{(N)}(\actchunk)
=
C_D \left(\frac{k}{N}\right)^{1/D}
p_{\mathcal{S}}(\actchunk)^{-1/D}
(1 + o_P(1)).
\]

Subtracting the expressions for $\mathcal{B}$ and $\mathcal{G}$ yields the result.
\end{proof}

\begin{tcolorbox}[title=Hallucination Suppression via Best-of-$N$ SDN Selection]

\begin{theorem}
Let the policy be given by:
\[
\pi_\theta = (1 - \epsilon)p^* + \epsilon p_{\mathrm{fail}}.
\]

Let $R(\actchunk)$ denote the population grounding score:
\[
R(\actchunk)
=
p_{\mathcal{B}}(\actchunk)^{-1/D}
-
p_{\mathcal{G}}(\actchunk)^{-1/D}.
\]

Assume:
\begin{itemize}
    \item (Consistency) $R_{\mathrm{ground}}^{(N)}(\actchunk)$ converges in probability to $R(\actchunk)$.
    
    \item (Stochastic dominance) For all $t \in \mathbb{R}$,
    \[
    \mathbb{P}_{\actchunk \sim p^*}(R(\actchunk) \ge t)
    \ge
    \mathbb{P}_{\actchunk \sim p_{\mathrm{fail}}}(R(\actchunk) \ge t).
    \]
\end{itemize}

Draw $\actchunk_1,\dots,\actchunk_N \overset{i.i.d.}{\sim} \pi_\theta$, and select
\[
\actchunk^* = \arg\max_{1 \le i \le N} R_{\mathrm{ground}}^{(N)}(\actchunk_i).
\]

Then,
\begin{equation}
\mathbb{P}(\actchunk^* \sim p_{\mathrm{fail}})
\le
\exp(-c N),
\end{equation}
for some constant $c > 0$.
\end{theorem}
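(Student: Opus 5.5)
The plan is to attach to each sample a latent label recording which mixture component produced it, and to read the event $\{\actchunk^* \sim p_{\mathrm{fail}}\}$ as ``the selected index carries the failure label''. We then bound this probability by splitting it into two parts:
\begin{itemize}
\item an \emph{oracle} part, in which selection uses the population score $R$;
\item an \emph{estimation} part, which accounts for the gap between $R_{\mathrm{ground}}^{(N)}$ and $R$.
\end{itemize}
For the estimation part we invoke Theorem~\ref{thm:mixture}. Concretely, write $\actchunk_i$ as drawn from $p^*$ when $Z_i=0$ and from $p_{\mathrm{fail}}$ when $Z_i=1$, where the $Z_i \sim \mathrm{Bernoulli}(\epsilon)$ are i.i.d.

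Before the argument can go through, the stated hypotheses need strengthening. If $R(\actchunk)$ has the same law under $p^*$ and $p_{\mathrm{fail}}$, then weak stochastic dominance holds with equality. In that case the argmax is exchangeable in the labels, and $\mathbb{P}(Z_{i^*}=1)=\epsilon$, which does not decay in $N$. I would therefore use a \emph{strict margin} version of the dominance assumption: there exist $t$, $\delta>0$ and $q>0$ such that
\[
\mathbb{P}_{p^*}\bigl(R \ge t+\delta\bigr) \ge q
\quad\text{and}\quad
\mathbb{P}_{p_{\mathrm{fail}}}\bigl(R > t\bigr)=0 .
\]
In words, grounded chunks reach a score level that failure chunks never reach, consistent with failures being close to the masked-observation set $\mathcal{B}$.

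The oracle step is then routine. Consider the event $E$ that some expert-labelled sample satisfies $R(\actchunk_i)\ge t+\delta$. On $E$, every failure sample has $R\le t$, so the oracle argmax is an expert sample. The samples are independent, and each one is both expert-labelled and above the threshold with probability at least $(1-\epsilon)q$. Hence
\[
\mathbb{P}(E^c)\le \bigl(1-(1-\epsilon)q\bigr)^N = e^{-c_1N}, \qquad c_1=-\log\bigl(1-(1-\epsilon)q\bigr)>0 .
\]

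The estimation step is the main obstacle. On $E$, the empirical argmax can pick a failure sample only if $\max_i |R_{\mathrm{ground}}^{(N)}(\actchunk_i)-R(\actchunk_i)|\ge \delta/2$. This needs two things beyond what is available:
\begin{itemize}
\item the deviation must be controlled \emph{uniformly over the $N$ random query points}, not just pointwise;
\item it must be controlled at an exponential rate, whereas the consistency assumption only gives convergence in probability.
\end{itemize}
There is also a scaling mismatch. By Theorem~\ref{thm:mixture}, $R_{\mathrm{ground}}^{(N)}$ converges to $C_D (k/N)^{1/D} R$, not to $R$. Rescaling does not affect the argmax, so I would normalize by $C_D (k/N)^{1/D}$ and state consistency for the normalized score.

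To get the exponential rate, I would use the compact support together with densities bounded away from zero and infinity. Under these conditions, each $k$-NN radius is a binomial order statistic, and Chernoff bounds give
\[
\mathbb{P}\bigl(|\rho_k/\bar\rho_k-1|>\eta\bigr)\le 2e^{-c\,k\,\eta^2} ,
\]
where $\bar\rho_k$ denotes the population $k$-NN radius from the proof of Theorem~\ref{thm:mixture}. A union bound over the $N$ query points and the two reference sets then bounds the estimation error by $4N e^{-c_2 k}$. This is $e^{-cN}$ only if $k$ grows linearly in $N$, which clashes with the requirement $k/N\to0$.

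Because of that clash, I would state the result honestly in one of two forms:
\begin{itemize}
\item either take the uniform estimation error as an explicit assumption with an exponential tail, giving $\mathbb{P}(\actchunk^*\sim p_{\mathrm{fail}})\le e^{-c_1N}+\mathbb{P}(\text{estimation error})$;
\item or conclude the weaker bound $e^{-c_1N}+o(1)$, which is the best the consistency-only hypothesis supports.
\end{itemize}
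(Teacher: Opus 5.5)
Your diagnosis is correct. The gap lies in the paper's proof, not in your proposal: as stated, with weak dominance and consistency only in probability, the theorem is false. Once the paper's argument is repaired, it yields exactly your oracle bound under your support-gap condition, and nothing more.

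The paper uses the same skeleton as your oracle step. It conditions on $N_{\mathrm{fail}}\sim\mathrm{Binomial}(N,\epsilon)$ and uses $\{M_{\mathrm{fail}}\ge M_*\}\subseteq\{M_{\mathrm{fail}}\ge t\}\cup\{M_*<t\}$. It then breaks down at three points:
\begin{itemize}
\item It asserts that weak dominance yields a $t$ with $\mathbb{P}_{p^*}(R\ge t)>\delta_t:=\mathbb{P}_{p_{\mathrm{fail}}}(R\ge t)$. Your equal-law example refutes this.
\item It reaches the bound $N_{\mathrm{fail}}\delta_t+(1-\delta_t)^{N_*}$. The second term should be $(1-\mathbb{P}_{p^*}(R\ge t))^{N_*}$, and the first has expectation $N\epsilon\delta_t$, which grows with $N$ unless $\delta_t=0$.
\item Its closing claim, that consistency carries the exponential bound over to $R^{(N)}_{\mathrm{ground}}$, is exactly the step you decline to make. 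It also misses the scaling mismatch you flagged: by Theorem~\ref{thm:mixture} the unnormalized score tends to $0$, so read literally the consistency hypothesis forces $R\equiv 0$.
\end{itemize}
Requiring $\delta_t=0$ is precisely your condition, with the threshold taken as your $t+\delta$. With the corrected second term, the expectation over $N_*\sim\mathrm{Binomial}(N,1-\epsilon)$ is exactly your $(1-(1-\epsilon)q)^N$.

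Three refinements to your write-up:
\begin{itemize}
\item For the empirical argmax, equal laws of $R(\actchunk)$ under the two components give exchangeability only when selecting by $R$ itself, because $R^{(N)}_{\mathrm{ground}}(\actchunk_i)$ depends on $\actchunk_i$ beyond $R(\actchunk_i)$. Take $p^*=p_{\mathrm{fail}}$ to make the counterexample airtight.
\item Your margin cannot be relaxed to strict dominance. Suppose $R$ is $\mathrm{Unif}[0,1]$ under $p_{\mathrm{fail}}$ and has density $2r$ on $[0,1]$ under $p^*$. At level $r$ the maximum is failure-labelled with conditional probability $\epsilon/(2(1-\epsilon)r+\epsilon)$, and the maximum tends to $1$. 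So even oracle selection returns a failure-labelled chunk with probability tending to $\epsilon/(2-\epsilon)$. Some separation at the top of the supports is genuinely needed.
\item In fallback (b), pointwise consistency in probability does not yield the $o(1)$ term either. The selected chunk must beat roughly $\epsilon N$ failure-labelled candidates, so you need $\max_{i\le N}|R^{(N)}_{\mathrm{ground}}(\actchunk_i)-R(\actchunk_i)|\to 0$ in probability after normalization. Your own Chernoff-plus-union bound $4Ne^{-c_2k}$ supplies this once $k/\log N\to\infty$, together with a bias bound. That is an additional hypothesis, not a consequence of the stated one.
\end{itemize}
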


\end{tcolorbox}

\begin{proof}
Let $N_{\mathrm{fail}} \sim \mathrm{Binomial}(N, \epsilon)$ and $N_* = N - N_{\mathrm{fail}}$.

Define
\[
M_{\mathrm{fail}} := \max_{i \le N_{\mathrm{fail}}} R(\actchunk_i),
\quad
M_* := \max_{j \le N_*} R(\actchunk_j).
\]

For any $t$,
\[
\mathbb{P}(M_* < t)
=
\mathbb{P}_{p^*}(R(\actchunk) < t)^{N_*},
\quad
\mathbb{P}(M_{\mathrm{fail}} \ge t)
\le
1 - (1 - \delta)^{N_{\mathrm{fail}}},
\]
where $\delta = \mathbb{P}_{p_{\mathrm{fail}}}(R(\actchunk) \ge t)$.

By stochastic dominance, there exists $t$ such that
\[
\mathbb{P}_{p^*}(R \ge t) > \delta.
\]

Thus,
\[
\mathbb{P}(M_{\mathrm{fail}} \ge M_*)
\le
N_{\mathrm{fail}} \delta + (1 - \delta)^{N_*}.
\]

Taking expectation over $N_{\mathrm{fail}}$ yields exponential decay:
\[
\mathbb{P}(\actchunk^* \sim p_{\mathrm{fail}})
\le
\exp(-c N).
\]

Finally, consistency of $R_{\mathrm{ground}}^{(N)}$ implies the same result holds when using the empirical score.
\end{proof}




\end{document}